\documentclass{article} 
\usepackage{iclr/iclr2026_conference,times}

\usepackage{amsmath,amsfonts,bm}
\usepackage{amsthm}
\usepackage{amssymb}

\def\eqref#1{equation~\ref{#1}}

\def\1{\bm{1}}

\def\vc{{\bm{c}}}

\def\vm{{\bm{m}}}

\def\vs{{\bm{s}}}

\def\vv{{\bm{v}}}

\def\vx{{\bm{x}}}

\DeclareMathAlphabet{\mathsfit}{\encodingdefault}{\sfdefault}{m}{sl}
\SetMathAlphabet{\mathsfit}{bold}{\encodingdefault}{\sfdefault}{bx}{n}

\newcommand{\rbr}[1]{\left(#1\right)}

\newcommand{\nbr}[1]{\left\|#1\right\|}

\newtheorem{lemma}{Lemma}

\newtheorem{assumption}{Assumption}
\newtheorem{theorem}{Theorem}

\newcommand{\W}{W_2}
\newcommand{\law}{\mathrm{law}}
\usepackage[dvipsnames]{xcolor}
\usepackage[
    colorlinks=true,
    linkcolor=red,
    citecolor=RoyalBlue,
]{hyperref}

\usepackage{url}

\usepackage{algorithm}
\usepackage{algorithmic}
\usepackage{wrapfig}
\usepackage{enumitem}
\usepackage{booktabs}
\usepackage{multirow}
\usepackage{graphicx}
\usepackage{comment}
\usepackage[table]{xcolor}
\usepackage{caption}
\usepackage{makecell} 
\usepackage{color-edits}

\title{\ours: Bridging Train-Inference Gap for Flow-Based GRPO with Langevin Correction}

\author{ Yingqing Guo \qquad  Hui Yuan \qquad Zijian He  \qquad Mengdi Wang  \qquad Zheng Ding }

\usepackage{xspace}
\newcommand{\ours}{LC-GRPO\xspace}

\begin{document}

\maketitle

\begin{abstract}

Flow-based generative models are typically sampled by solving a deterministic ordinary differential equation (ODE), whereas online reinforcement learning requires stochastic rollouts for policy exploration and optimization.
Existing GRPO methods for flow models therefore replace the inference-time ODE with a stochastic differential equation (SDE) during training. Although the ODE and SDE share the same marginal distributions in continuous time, their finite-step discretizations can differ substantially.
In particular, SDE rollouts often become blurry as the exploration noise increases, creating a mismatch between the samples used for reinforcement learning and those generated by the test-time ODE sampler. We introduce \textbf{\ours}, a flow-based GRPO framework with Langevin correction. Each rollout transition first takes an inference-aligned ODE Euler step and then applies a stochastic Langevin correction targeting the marginal distribution at the resulting timestep. The required score is recovered directly from the flow velocity, requiring no additional score model, while the resulting transition remains an isotropic Gaussian with a tractable likelihood for policy optimization.
We theoretically show that, under suitable conditions, one Langevin correction step reduces the Wasserstein error of an imperfect ODE Euler step. At a matched randomness level, we further show that the proposed transition can be more accurate than the standard Euler--Maruyama discretization of the reverse SDE. Experiments on SD3.5-Medium, FLUX.1-Dev, and HunyuanVideo demonstrate that \ours consistently improves reward optimization across text-to-image and text-to-video tasks, preserves generation quality, and substantially narrows the gap between stochastic training rollouts and deterministic test-time ODE inference.

\end{abstract}

\section{Introduction}

Flow matching and rectified-flow models have emerged as a powerful foundation for visual generation, supporting high-quality text-to-image and text-to-video synthesis through deterministic ordinary differential equation (ODE) sampling \citep{liu2022flow,lipman2022flow,esser2024scaling,kong2024hunyuanvideo}.
Compared with stochastic diffusion sampling, the near-straight transport paths learned by these models enable accurate generation with relatively few sampling steps, making them especially attractive for large-scale visual models.
As their generative capabilities improve, post-training with external feedback has become increasingly important for adapting them to objectives that are difficult to capture through supervised learning alone, including human preference, visual quality, compositional correctness, and accurate text rendering.
Online reinforcement learning methods such as DDPO and, more recently, Group Relative Policy Optimization (GRPO), provide a direct way to optimize such non-differentiable rewards from generated samples \citep{black2024training,liu2026flow,xue2025dancegrpo}.
This issue is not unique to visual generation: training-inference gap has also emerged in LLM RL post-training, where separate inference and training stacks can introduce numerical and algorithmic inconsistencies, causing the rollout policy used for data collection to differ from the policy being optimized\citep{qi2025defeating,wasti2025no,yao2025your,liang2026mirage}.

\begin{figure}[t]
\centering	\includegraphics[width=0.9\linewidth]{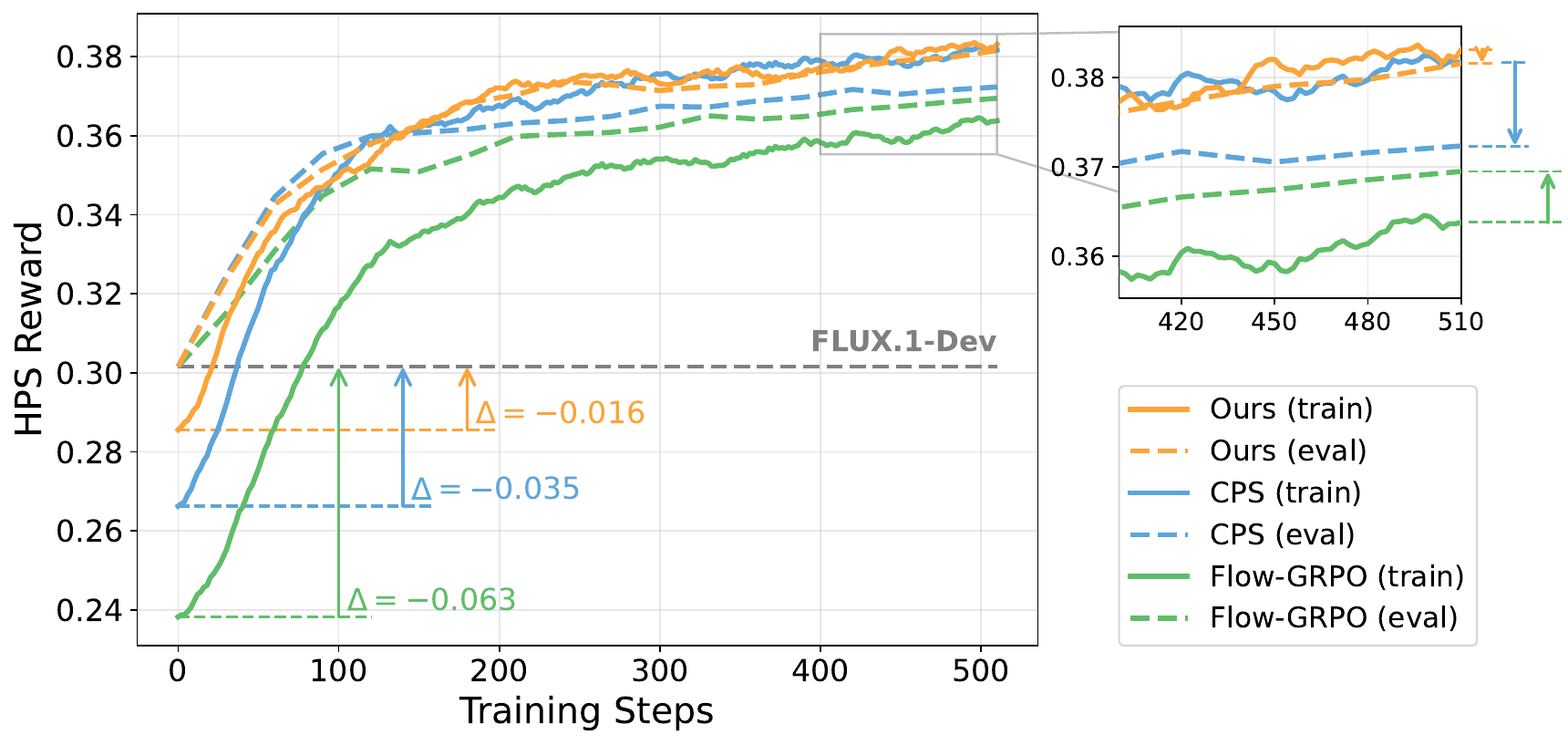} 
\includegraphics[width=0.9\linewidth]{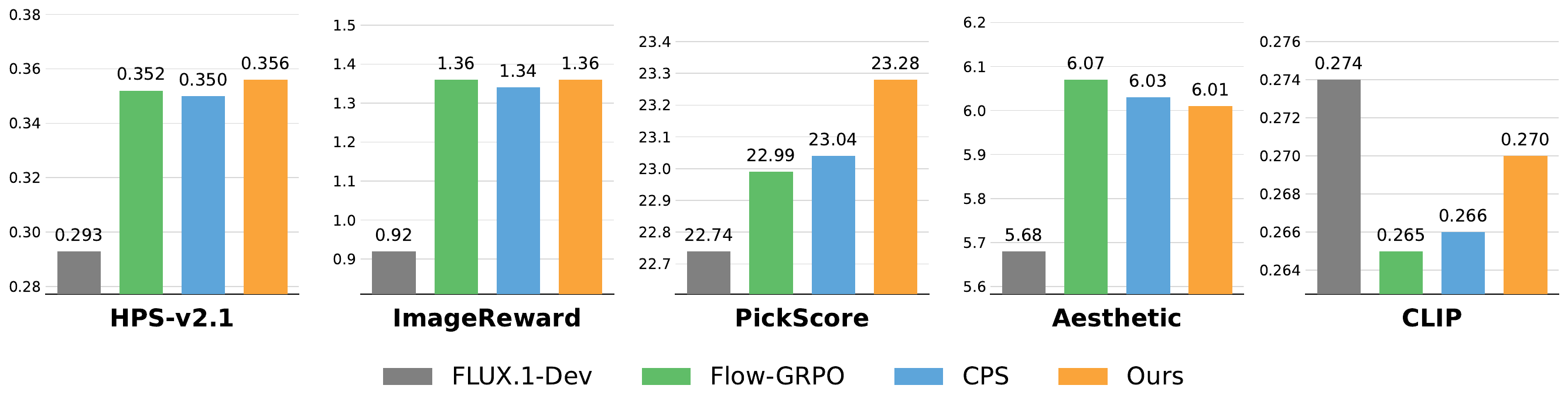}
	\caption{Performance of \ours. \textbf{Top:} Langevin correction narrows the gap between stochastic RL rollouts and ODE inference at test time, as shown at step 0, enabling \ours to achieve the smallest training--evaluation gap. \textbf{Bottom:} Evaluation results on DrawBench~\citep{saharia2022photorealistic}. }
  \label{fig:close_gap}
  \vspace{-10pt}
\end{figure}

Applying online reinforcement learning to flow models, however, introduces a fundamental mismatch between training and inference. At test time, flow models are typically sampled by solving a deterministic reverse-time ODE. Reinforcement learning instead requires a stochastic policy that can explore multiple actions from the same state and whose transition likelihood can be evaluated.
Existing flow-based GRPO methods therefore replace the inference ODE with a theoretically equivalent reverse-time stochastic differential equation (SDE) during rollout generation. In continuous time, the ODE and SDE share the same marginal distributions. In practice, however, both processes must be discretized using a small number of sampling steps, and their finite-step behavior can differ substantially. The Euler--Maruyama discretization used for SDE rollouts accumulates error from both the discretized drift and the injected Brownian noise, whereas test-time ODE sampling follows a cleaner deterministic trajectory.

We address this tension with \textbf{\ours}, a flow-based GRPO framework based on Langevin correction.
Rather than directly simulating the reverse SDE, each rollout transition first follows the same ODE Euler update used by the inference sampler.
We then apply a single Langevin step targeting the marginal distribution at the resulting timestamp.
The required score is obtained directly from the flow velocity through Tweedie's formula, so the correction requires no additional score model.
This predictor--corrector construction separates transport from exploration: the ODE step preserves the inference-time sampling trajectory, while the Langevin step introduces stochasticity and corrects local sampling error around the target marginal.
Importantly, the resulting transition remains an isotropic Gaussian with a tractable density, allowing it to be used directly as the stochastic policy in the GRPO objective. The Langevin step size naturally controls the amount of policy exploration.

We provide theoretical justification for this design.
Under strong log-concavity and score smoothness, we show that whenever an ODE Euler step has nonzero error, an appropriately chosen Langevin correction strictly decreases its Wasserstein distance to the target marginal.
We further compare Langevin-corrected sampling with the standard reverse-SDE Euler--Maruyama step at a matched randomness level.
Under mild regularity conditions and an explicit comparison condition, the two methods have errors of the same asymptotic order, but the Langevin-corrected step has a strictly smaller leading-order Wasserstein error.
These results formalize the intuition that stochastic exploration need not come at the cost of the sampling accuracy achieved by the inference ODE.

We evaluate \ours on both image and video generation using SD3.5-Medium\citep{esser2024scaling}, FLUX.1-Dev\citep{flux2024}, and HunyuanVideo\citep{kong2024hunyuanvideo}.
The experiments cover visual text rendering, human-preference alignment, multi-reward optimization, and video-quality optimization.
Across these settings, \ours consistently improves the optimized rewards over Flow-GRPO, DanceGRPO, and coefficients-preserving sampling, while maintaining a favorable reward--quality trade-off on auxiliary image and video metrics.
It also produces substantially clearer stochastic rollouts and the smallest discrepancy between training-rollout and ODE-evaluation rewards.
These gains are obtained under a rollout-computation budget that matches or favors the baselines.

To summarize, our main contributions are as follows:
\begin{enumerate}
    \item We identify finite-step SDE discretization as a central source of the training--inference gap in flow-based GRPO and show its adverse effect on rollout quality and reward alignment.

    \item We introduce \ours, which combines inference-aligned ODE transport with a distribution-preserving Langevin correction to provide accurate, stochastic, and likelihood-tractable reinforcement-learning rollouts.

    \item We establish theoretical guarantees for the proposed correction and demonstrate consistent improvements across text-to-image and text-to-video reinforcement-learning tasks.
\end{enumerate}

\section{Related Work}
We first review alignment methods for diffusion and flow models, including
post-training with external feedback and inference-time guidance, and then
focus on closing the training--inference gap in flow-based GRPO and LLM RL
post-training.

\paragraph{RL for Diffusion and Flow Models.}
Aligning diffusion and flow models during post-training generally follows one
of three strategies.
(1) Direct reward backpropagation, which fine-tunes the model by
backpropagating through a differentiable reward function
\citep{clark2024directly,prabhudesai2023aligning}.
(2) Reward-augmented regression losses, which combine external feedback with
a diffusion regression objective. These include Reward-Weighted Regression
(RWR) \citep{lee2023aligning,dong2023raft}; Direct Preference Optimization
(DPO), which leverages paired preference data with binary human-feedback
rewards
\citep{rafailov2023direct,wallace2024diffusion,yang2024using};
and DiffusionNFT \citep{zheng2025diffusionnft}, which generalizes DPO by
incorporating negative samples with dense, continuous rewards.
(3) PPO-style policy-gradient methods, ranging from early approaches such as
DPOK \citep{fan2023dpok} and DDPO \citep{black2024training} to recent GRPO
extensions for diffusion and flow models
\citep{liu2026flow,xue2025dancegrpo,deng2026densegrpo,
savani2026stepwise,ding2025treegrpo}.
Beyond model post-training, complementary inference-time alignment methods
guide diffusion and flow sampling using time-dependent classifiers or
gradients derived from downstream objectives
\citep{song2020score,dhariwal2021diffusion,chung2022diffusion,yuan2023reward,guo2024gradient,guo2026training}.

\paragraph{Training--Inference Gap in Flow-Based GRPO.}
Recent work explores reducing the mismatch between stochastic training
rollouts and deterministic inference samplers in flow-based GRPO.
MixGRPO \citep{li2025mixgrpo} integrates SDE and ODE sampling within the
rollout to improve training efficiency.
CPS \citep{wang2025coefficients} instead adopts DDIM sampling to improve
sample quality.
SAGE-GRPO \citep{zheng2026manifold} proposes a concise formulation of the
noise standard deviation that better aligns the sampling process with the
data manifold for video generation.

\paragraph{Training--Inference Mismatch in LLM RL Post-Training.}
Practical LLM RL systems often use separate inference engines for rollout
generation and separate frameworks for training, which can introduce a
mismatch.
One line of work addresses numerical inconsistencies
\citep{qi2025defeating,qiu2026fp8,wasti2025no}; for example,
\citet{qi2025defeating} switch from BF16 to FP16 during RL fine-tuning.
A complementary line of work takes an algorithmic approach by incorporating
sampler-side information into the training update
\citep{yao2025your,liu2025speed,liang2026mirage}.
Notably, \citet{yao2025your} use the training-to-sampler probability ratio as
a clipped correction weight.
\section{Preliminaries}

\paragraph{Flow Matching.} 
Let $\boldsymbol{x}_0 \sim X_0$ denote a sample from the data distribution and
$\boldsymbol{x}_1 \sim X_1$ denote a noise sample from the prior distribution, such as a standard Gaussian. Rectified Flow \citep{liu2022flow,lipman2022flow} constructs a time-dependent interpolation between $\boldsymbol{x}_0$ and $\boldsymbol{x}_1$:
$$
    \boldsymbol{x}_t = (1-t)\boldsymbol{x}_0 + t\boldsymbol{x}_1,
    \quad t \in [0,1].
$$
Under this convention, $t=0$ corresponds to data and $t=1$ to noise. Generation is therefore performed by solving the reverse-time ODE from the prior distribution back to the data distribution: 
\begin{equation}\label{eq:inf_ode}
    \mathrm{d} \boldsymbol{x}_t = \boldsymbol{v}_t  \mathrm{d} t, 
\end{equation}
with $\boldsymbol{x}_1 \sim X_1$ and $t:1 \to 0$. The velocity field $\boldsymbol{v}_t$ transports samples from the prior distribution to the data distribution. In practice, it is approximated by a transformer $\boldsymbol{v}_\theta(\boldsymbol{x}_t,t)$, trained with the flow matching regression objective:
$
    \min_{\theta}
    \mathbb{E}_{t,\boldsymbol{x}_0,\boldsymbol{x}_1}
    \left[
        \left\|
        \boldsymbol{v}_\theta(\boldsymbol{x}_t,t)
        -
        (\boldsymbol{x}_1-\boldsymbol{x}_0)
        \right\|_2^2
    \right].
$
The learned velocity field is then used to solve the reverse-time ODE (Eq.~\ref{eq:inf_ode}) for generation.

\paragraph{SDE Sampling as RL Rollout.}
The inference process of flow matching can be formulated as a Markov decision process (MDP)~\citep{black2024training}. At denoising step $t$, the state is defined as $\boldsymbol{s}_t:= (\boldsymbol{c}, t, \boldsymbol{x}_t)$, the action is the next denoised sample $\boldsymbol{a}_t:= \boldsymbol{x}_{t-1}$ predicted by the model, and the policy is given by $\pi(\boldsymbol{a}_t \mid \boldsymbol{s}_t):=  p_\theta(\boldsymbol{x}_{t-1} \mid \boldsymbol{x}_t, \boldsymbol{c})$.

However, ODE-based sampling is deterministic and induces no action-level stochasticity, which blocks the stochastic exploration required by reinforcement learning. To resolve this mismatch, Flow-GRPO~\citep{liu2026flow} and DanceGRPO~\citep{xue2025dancegrpo} replace the reverse-time ODE with an equivalent SDE:
\begin{equation}\label{eq:inf_sde}
\mathrm{d}\boldsymbol{x}_t = \left[ \boldsymbol{v}_t(\boldsymbol{x}_t,t) + \frac{\sigma_t^2}{2t} \left( \boldsymbol{x}_t + (1-t)\boldsymbol{v}_t(\boldsymbol{x}_t,t) \right) \right]\mathrm{d}t + \sigma_t \mathrm{d}\boldsymbol{w}_t ,
\end{equation}
which adds noise at each denoising step, enabling trajectories to serve as stochastic RL rollouts.

\paragraph{GRPO on Flow Models.} Given a prompt \(\vc\), the flow model \(p_{\theta}\) generates \(G\) independent denoising trajectories \(\{(\vx^i_t)_{t=0}^T\}_{i=1}^G\), each ending at sample \(\vx^i_0\). The advantage of the \(i\)-th sample is computed by normalizing its final reward \(R(\vx^i_0,\vc)\) against the rewards of other samples in the same group:
\begin{equation}
\label{eq:grpo_advantage}
\hat{A}^i_t = \frac{R(\vx^i_0, \vc) - \text{mean}(\{R(\vx^j_0, \vc)\}_{j=1}^G)}{\text{std}(\{R(\vx^j_0, \vc)\}_{j=1}^G)}.
\end{equation}
The policy is trained by maximizing the GRPO objective:
\begin{equation}
\mathcal{J}(\theta) = \mathbb{E}_{\vc\sim \mathcal{C}, \{\vx^i\}_{i=1}^G\sim \pi_{\theta_\text{old}}(\cdot\mid \vc)} f(r,\hat{A},\theta, \varepsilon, \beta),
\label{eq:grpoloss}
\end{equation}
where
\begin{align*}
f(r,\hat{A},\theta, \varepsilon, \beta) &=
\frac{1}{G}\sum_{i=1}^{G} \frac{1}{T}\sum_{t=1}^{T} \Bigg( 
\min \Big( r^i_t(\theta) \hat{A}^i_t,  
\ \text{clip} \Big( r^i_t(\theta), 1 - \varepsilon, 1 + \varepsilon \Big) \hat{A}^i_t \Big)
- \beta D_{\text{KL}}(\pi_{\theta} || \pi_{\text{ref}}) 
\Bigg), \\
r^i_t(\theta) &=
\frac{p_{\theta}(\vx^i_{t-1} \mid \vx^i_t, \vc)}{p_{\theta_{\text{old}}}(\vx^i_{t-1} \mid \vx^i_t, \vc)}.
\end{align*}
In flow models, an action is one reverse-time sampling step from \(\vx^i_t\) to \(\vx^i_{t-1}\), obtained by discretizing Eq.~\ref{eq:inf_sde} (see Eq.~\ref{eq:discretize_sde}). The resulting transition probability $p_{\theta}(\vx^i_{t-1} \mid \vx^i_t, \vc)$ is an isotropic Gaussian, whose density can be computed in closed form.

\section{Method}
In this section, we first revisit the training--inference gap in standard GRPO for flow models, where SDE sampling is used to generate RL rollouts. We then propose \ours, which introduces an additional Langevin correction step to reduce the gap between stochastic RL rollouts and deterministic ODE inference. We validate this approach both theoretically and empirically.
\paragraph{Training--Inference Gap from SDE Rollouts.}
During inference, although the ODE (Eq.~\ref{eq:inf_ode}) and SDE (Eq.~\ref{eq:inf_sde}) reach the same marginal distribution at each timestep in continuous time \citep{kloeden2012numerical}, their finite-step numerical samplers can differ. ODE Euler step is usually taken for current advanced image and video generation \citep{esser2024scaling,wu2025qwen,wan2025wan}:
\begin{equation}\label{eq:discretize_ode}
    \boldsymbol{x}_{t - \Delta t} = \boldsymbol{x}_{t} - \boldsymbol{v}_t(\boldsymbol{x}_t,t) \Delta t.
\end{equation}
Euler--Maruyama discretization is commonly used for SDE rollouts in GRPO for flow models:
\begin{equation}\label{eq:discretize_sde}
    \boldsymbol{x}_{t - \Delta t} = \boldsymbol{x}_{t} - \left[ \boldsymbol{v}_t(\boldsymbol{x}_t,t) + \frac{\sigma_t^2}{2t} \left( \boldsymbol{x}_t + (1-t)\boldsymbol{v}_t(\boldsymbol{x}_t,t) \right) \right] \Delta t  + \sigma_t \sqrt{\Delta t} \, \xi,
\end{equation}
where $\xi \sim \mathcal{N}(0, I_d)$. For the diffusion coefficient, Flow-GRPO sets $\sigma_t = \eta \sqrt{t/(1-t)}$. Here, $\eta$ controls the noise level, and hence the amount of stochasticity and exploration during RL training. Both methods show that larger $\eta$ values often accelerate GRPO training and lead to higher rewards.

As shown in Fig.~\ref{fig:rollout imgs}, SDE-based sampling tends to produce lower-quality samples than ODE-based sampling, yielding blurrier images; this degradation becomes more severe as the noise level increases. Similar behavior has been observed in prior work~\citep{lipman2022flow,song2020score}. This discrepancy creates a training--inference gap in RL for flow models: SDE rollouts are worse than inference-time ODE samples, making their rewards a less accurate and less effective learning signal.

\begin{figure}[t]
\centering	\includegraphics[width=0.9\linewidth]{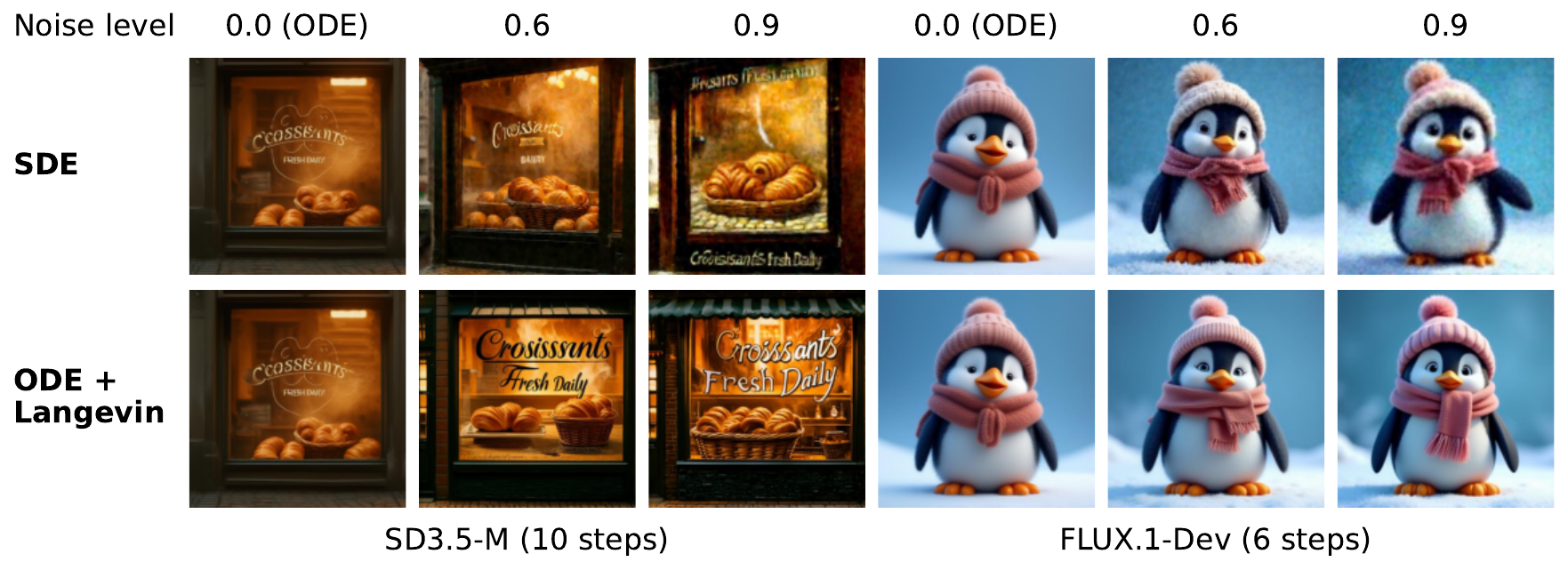}\\
	\caption{\textbf{Sampling Method Comparison}. As the noise level\protect\footnotemark  increases, SDE-based sampling degrades sample quality, whereas ODE sampling with Langevin correction preserves sample clarity.
  }\label{fig:rollout imgs}
\end{figure}

\paragraph{Langevin Step: Distribution-Preserving Stochastic Correction.}
Motivated by the training--inference gap, we aim to design a sampler that narrows this mismatch. SDE-based sampling accumulates error from both drift discretization and Brownian increments. To avoid introducing additional error through direct SDE simulation, we first take an ODE Euler step from $\boldsymbol{x}_t$ to obtain an intermediate proposal $\boldsymbol{x}'$ at time $t-\Delta t$. We then seek a stochastic update that satisfies two requirements: (i) it injects stochasticity while preserving the target distribution at time $t-\Delta t$; and (ii) it improves sample quality by mitigating local numerical and discretization errors.

We adopt a simple yet effective strategy: after each ODE step, we apply one Langevin dynamics step initialized at $\boldsymbol{x}'$, using the score estimate at time $t-\Delta t$:
\begin{equation}\label{eq:lc_step}
    \boldsymbol{x}_{t - \Delta t} = \boldsymbol{x}^\prime + \epsilon_t \, \boldsymbol{s}(\boldsymbol{x}^\prime,t-\Delta t) + \sqrt{2\epsilon_t}\, \xi,
\end{equation}
where $\epsilon_t$ is the step size. The score function is obtained via Tweedie's formula \citep{efron2011tweedie} as $\boldsymbol{s}(\boldsymbol{x},t) = - \rbr{\boldsymbol{x} + (1-t)\boldsymbol{v}(\boldsymbol{x},t)}/t $. Following \citet{song2020score}, we set $\epsilon_t \propto \left(\nbr{\boldsymbol{z}} / \nbr{\boldsymbol{s}}\right)^2$. For stability, we use its expectation, which gives $\sqrt{2\epsilon_t} = \eta (t-\Delta t)$, where $\eta$ controls the noise level.

This correction has the desired properties. First, Langevin dynamics driven by the score at time $t-\Delta t$ leaves the marginal $p_{t-\Delta t}$ invariant, thereby introducing randomness while preserving that marginal. Second, it serves as a corrector step: prior work on score-based generative modeling has shown that such stochastic correction can improve sampling quality~\citep{song2020score}. As Fig~\ref{fig:rollout imgs} shows, sampling with Langevin correction preserves sample quality even under large noise levels.

\footnotetext{The noise level refers to $\eta$ in the SDE (Eq.~\ref{eq:discretize_sde}), where $\sigma_t = \eta \sqrt{t/(1-t)}$ following Flow-GRPO, and in the Langevin step (Eq.~\ref{eq:lc_step}), where we set $\sqrt{2\epsilon_t} = \eta (t-\Delta t)$. For the same $\eta$, the sum of the variance of the Gaussian noise over all timesteps is larger for Langevin than for the SDE.}

\renewcommand{\algorithmiccomment}[1]{%
  \hfill $\triangleright$\,\textit{#1}%
}

\begin{algorithm}
\caption{Langevin Correction Sampling}
\label{alg:sampling_lc}
\begin{algorithmic}[1]
\REQUIRE flow model $\boldsymbol{v}_{\theta}$, Langevin step size $\epsilon_t$, time schedule $\{t_i\}_{i=1}^{N}$ with $t_1=1,\; t_N=0$
\STATE  $\boldsymbol{x}_1 \sim \mathcal{N}(\boldsymbol{0}, I_d)$
\FOR{$i=1,\ldots,N-1$}
    \STATE
    $\boldsymbol{x}' \leftarrow
    \boldsymbol{x}_{i}
    -(t_i-t_{i+1})
    \boldsymbol{v}_{\theta}(\boldsymbol{x}_{i},t_i)$
    \COMMENT{Euler step} 

    \IF{$i < N - 1$}
        \STATE
        $\boldsymbol{s}_{t_{i+1}}(\boldsymbol{x}')
        \leftarrow
        -\dfrac{
        \boldsymbol{x}'
        +(1-t_{i+1})
        \boldsymbol{v}_{\theta}(\boldsymbol{x}',t_{i+1})
        }{t_{i+1}}$
        \COMMENT{Score estimation}
        \STATE
        $\boldsymbol{z}\sim\mathcal{N}(\boldsymbol{0},I_d)$
        \STATE
        $\boldsymbol{x}_{i+1}\leftarrow
        \boldsymbol{x}'
        +\epsilon_{t_i}\boldsymbol{s}_{t_{i+1}}(\boldsymbol{x}')
        +\sqrt{2\epsilon_{t_i}}\,\boldsymbol{z}$
        \COMMENT{Langevin correction}
    \ELSE
        \STATE
        $\boldsymbol{x}_{i+1}\leftarrow\boldsymbol{x}'$
        \COMMENT{Final ODE step}
    \ENDIF
\ENDFOR
\RETURN $\boldsymbol{x}_N$
\end{algorithmic}
\end{algorithm}

As for the MDP we formulate for GRPO, the transition kernel $p_{\theta}(\vx_{t-\Delta t} \mid \vx_t, \vc)$ remains an isotropic Gaussian $\mathcal{N}(\vm_\theta(\vx_t), 2\epsilon_t I_d)$, whose mean $\vm_\theta(\vx_t) = \vx' + \epsilon_t \vs_\theta(\vx', t-\Delta t)$ is the deterministic part of the Langevin update in Alg.~\ref{alg:sampling_lc}, where $\vx' = \vx_t - \Delta t\,\vv_\theta(\vx_t, t)$ is the Euler step. Thus, the Langevin step size $\epsilon_t$ also controls the extent of exploration of the action policy.

The following theorem justifies the correction step in Alg.~\ref{alg:sampling_lc}: whenever the Euler step is imperfect, one Langevin step provably moves the sample closer to the target marginal. The full statement and proof are deferred to Appx.~\ref{appx:theory} and Appx.~\ref{proof:corrector-improves}, respectively.
\begin{theorem}[Informal]
Suppose the target $p_{t-\Delta t}$ is $\alpha$-strongly log-concave with
$L$-Lipschitz score. Let $\boldsymbol{x}_{\mathrm{ode}}$ be the result of one
backward Euler step to time $t-\Delta t$ (Eq.~\ref{eq:discretize_ode}), and let
$\varepsilon := \W\big(\mathrm{law}(\boldsymbol{x}_{\mathrm{ode}}),\, p_{t-\Delta t}\big) > 0$ be its error. If the Langevin step size satisfies $\epsilon_t \le \alpha/L^2$ and
$\epsilon_t \lesssim \alpha^2\varepsilon^2/(L^2 d)$, then one Langevin correction step
strictly reduces the error:
\begin{equation*}
    \W\big(\mathrm{law}(\boldsymbol{x}_{\mathrm{lc}}),\, p_{t-\Delta t}\big)
    \;<\;
    \W\big(\mathrm{law}(\boldsymbol{x}_{\mathrm{ode}}),\, p_{t-\Delta t}\big),
\end{equation*}
where $\boldsymbol{x}_{\mathrm{lc}}$ is the Langevin-corrected sample
(Eq.~\ref{eq:lc_step}) and $\W$ is the Wasserstein distance.
\end{theorem}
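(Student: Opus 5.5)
The plan is a synchronous coupling combined with the one-step contraction of the Langevin map, plus a one-step discretization bias bound for Langevin started at stationarity. Write $p := p_{t-\Delta t}$, $\epsilon := \epsilon_t$, and let $T(\vx) := \vx + \epsilon\,\nabla\log p(\vx)$ be the deterministic part of the Langevin update in Eq.~\ref{eq:lc_step}, so that $\vx_{\mathrm{lc}} = T(\vx_{\mathrm{ode}}) + \sqrt{2\epsilon}\,\xi$. Let $\vx_{\mathrm{ode}}$ and $\vx_\star \sim p$ be optimally coupled in $\W$, so that $\mathbb{E}\|\vx_{\mathrm{ode}}-\vx_\star\|^2 = \varepsilon^2$. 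Apply the same Langevin step with the \emph{same} Gaussian noise $\xi$ to both, giving $\vx_{\mathrm{lc}}$ and $\vy := T(\vx_\star) + \sqrt{2\epsilon}\,\xi$. The triangle inequality then gives
\begin{equation*}
\W(\law(\vx_{\mathrm{lc}}), p) \le \W(\law(\vx_{\mathrm{lc}}), \law(\vy)) + \W(\law(\vy), p).
\end{equation*}

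\textbf{Contraction term.} Since the noise cancels under the coupling, $\vx_{\mathrm{lc}}-\vy = T(\vx_{\mathrm{ode}}) - T(\vx_\star)$. Strong log-concavity gives $\langle \nabla\log p(\vx)-\nabla\log p(\vy), \vx-\vy\rangle \le -\alpha\|\vx-\vy\|^2$, and the Lipschitz score gives $\|\nabla\log p(\vx)-\nabla\log p(\vy)\| \le L\|\vx-\vy\|$. Expanding the square,
\begin{equation*}
\|T(\vx)-T(\vy)\|^2 \le \rbr{1-2\alpha\epsilon+L^2\epsilon^2}\|\vx-\vy\|^2 \le (1-\alpha\epsilon)\|\vx-\vy\|^2
\end{equation*}
whenever $\epsilon \le \alpha/L^2$. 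This is exactly where the first step-size condition is used. It yields $\W(\law(\vx_{\mathrm{lc}}),\law(\vy)) \le \sqrt{1-\alpha\epsilon}\,\varepsilon \le (1-\alpha\epsilon/2)\,\varepsilon$.

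\textbf{Stationary bias term.} We need $\W(\law(\vy),p) \lesssim L\epsilon\sqrt{d}$, i.e.\ one unadjusted Langevin step started exactly at $p$ moves only $O(\epsilon\sqrt{d})$ in $\W$. The plan is to couple $\vy$ with the exact Langevin diffusion $\vz_s$ run for time $\epsilon$ from $\vx_\star$, driven by the same Brownian motion, so that $\vz_\epsilon \sim p$ by invariance. The difference is $\int_0^\epsilon \rbr{\nabla\log p(\vz_s) - \nabla\log p(\vx_\star)}\,\mathrm{d}s$. Bounding it by $L\int_0^\epsilon \|\vz_s-\vx_\star\|\,\mathrm{d}s$ and using $\mathbb{E}\|\vz_s-\vx_\star\|^2 \lesssim s\,d + s^2\,\mathbb{E}_p\|\nabla\log p\|^2 \le s\,d + s^2 L d$ (since $\mathbb{E}_p\|\nabla\log p\|^2 = \mathbb{E}_p[-\Delta\log p] \le Ld$) gives $O(L\epsilon^{3/2}\sqrt{d})$ for $\epsilon L \le 1$. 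This is the standard Dalalyan/Durmus--Moulines one-step estimate, and it is even better than needed.

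\textbf{Conclusion.} Combining the two terms,
\begin{equation*}
\W(\law(\vx_{\mathrm{lc}}),p) \le \varepsilon - \tfrac{\alpha\epsilon}{2}\varepsilon + C L\epsilon^{3/2}\sqrt{d},
\end{equation*}
which is strictly less than $\varepsilon$ iff $C L\sqrt{\epsilon d} < \alpha\varepsilon/2$, i.e.\ $\epsilon < \alpha^2\varepsilon^2/(4C^2L^2 d)$. This matches the stated condition $\epsilon \lesssim \alpha^2\varepsilon^2/(L^2 d)$, which is what confirms that the $\epsilon^{3/2}$ bias rate is the intended one. The hypothesis $\varepsilon>0$ is what makes this window of admissible $\epsilon$ nonempty.

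\textbf{Main obstacle.} The delicate step is the stationary bias bound: getting the sharp $\epsilon^{3/2}\sqrt{d}$ order rather than the crude $\epsilon\sqrt{d}$. A crude bound would only give a condition of the form $\epsilon$-independent versus $\varepsilon$, which does not produce strict improvement. This is also the only place requiring care about moments of $p$ (the integration-by-parts identity for $\mathbb{E}_p\|\nabla\log p\|^2$). In the formal version, the score used by the algorithm is the exact score of $p_{t-\Delta t}$, so no score-error term needs tracking here.
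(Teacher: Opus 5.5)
Your proposal is correct and is essentially the paper's argument. The paper couples $\boldsymbol{x}_{\mathrm{ode}}$ optimally with $\boldsymbol{w}_0\sim p_r$, runs the exact Langevin diffusion from $\boldsymbol{w}_0$ with $\xi=B_\epsilon/\sqrt{\epsilon}$, and splits $\boldsymbol{x}_{\mathrm{lc}}-\boldsymbol{w}_\epsilon$ into the same two pieces as you do, namely the contraction term $T(\boldsymbol{x}_{\mathrm{ode}})-T(\boldsymbol{w}_0)$ and the stationary discretization term $\int_0^\epsilon(\nabla\log p_r(\boldsymbol{w}_0)-\nabla\log p_r(\boldsymbol{w}_\tau))\,\mathrm{d}\tau$; it combines them by Minkowski in $L^2$ rather than by your $W_2$ triangle inequality, which gives the same bound $(1-\alpha\epsilon/2)\varepsilon+\tfrac{5}{3}L\sqrt{d}\,\epsilon^{3/2}$ and, with $C=5/3$, exactly the formal condition $\epsilon<\tfrac{9}{100}\alpha^2\varepsilon^2/(L^2d)$. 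One wording fix: the requirement you state at the start of the bias paragraph ($\lesssim L\epsilon\sqrt{d}$) would not suffice, as your own ``main obstacle'' remark explains, so the $\epsilon^{3/2}$ rate is exactly what is needed rather than ``better than needed.''
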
\label{thm:imp_ode_informal}

In flow-based GRPO, increasing the noise level, and thus the extent
of policy exploration, typically improves reward
optimization~\citep{liu2026flow} but degrades the sample quality of rollouts.
Preserving quality at a fixed randomness level is therefore
important. Below we show that when our Langevin-corrected sampler is
matched to the Euler--Maruyama SDE at the same randomness level, that
is, comparing Eq.~\ref{eq:discretize_sde} and Eq.~\ref{eq:lc_step}
and setting the Langevin step size so that
$\sqrt{2\epsilon_t} = \sigma_t \sqrt{\Delta t}$, i.e.,
$\epsilon_t = \sigma_t^2 \Delta t / 2$, Langevin correction sampling
is provably more accurate than the SDE. The formal statement and its proof are given in
Appx.~\ref{appx:theory} and Appx.~\ref{proof:main}, respectively.

\begin{theorem}[Informal]
Under mild regularity (Assumption~\ref{ass:reg}) and with the step size matched
to the noise level, $\epsilon_t = \sigma_t^2\Delta t/2$, both the corrector
$\boldsymbol{x}_{\mathrm{lc}}$ and the reverse-SDE step $\boldsymbol{x}_{\mathrm{sde}}$
(Eq.~\ref{eq:discretize_sde}) have leading-order Wasserstein error of order
$\Delta t^2$ (Eq.~\ref{eq:errors}). If the comparison condition Eq.~\ref{eq:lc_sde_cond_matched_noise} holds, the Langevin correction sample step is strictly more accurate:
\begin{equation*}
    \W\big(\mathrm{law}(\boldsymbol{x}_{\mathrm{lc}}),\, p_{t-\Delta t}\big)
    \;<\;
    \W\big(\mathrm{law}(\boldsymbol{x}_{\mathrm{sde}}),\, p_{t-\Delta t}\big),
\end{equation*}
for all sufficiently small $\Delta t$.
\end{theorem}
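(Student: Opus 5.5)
We prove the second informal theorem by a local (one-step) expansion of both transitions around the exact flow map, started from a common state $\boldsymbol{x}_t \sim p_t$. We write the exact reverse dynamics over $[t-\Delta t, t]$ in two equivalent forms. The first is the ODE flow $\Phi_{\Delta t}$, with $\Phi_{\Delta t\#}p_t = p_{t-\Delta t}$. The second is the reverse SDE, whose exact solution $\boldsymbol{x}^\star_{\mathrm{sde}}$ also has law $p_{t-\Delta t}$. Each numerical update is then coupled synchronously with an exact process that has the same marginal: the reverse-SDE step is coupled with the exact SDE driven by the same Brownian increment $\sigma_t\sqrt{\Delta t}\,\xi$, and the corrector is coupled with $Y = \Phi_{\Delta t}(\boldsymbol{x}_t)$ followed by one step of the \emph{exact} Langevin diffusion for $p_{t-\Delta t}$ run for time $\epsilon_t$, with matched noise $\sqrt{2\epsilon_t}\,\xi$. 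Since this diffusion leaves $p_{t-\Delta t}$ invariant, the second process has law $p_{t-\Delta t}$. In both cases the $\W$ error is then bounded by the $L^2$ norm of the pathwise difference, and the leading order of that difference gives the constant in Eq.~\ref{eq:errors}.

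\textbf{Expansion of the corrector.} Under Assumption~\ref{ass:reg}, with enough smoothness of $\boldsymbol{v}$ and $\boldsymbol{s}$ and moment bounds on $p_t$, we Taylor expand. The Euler error is $\boldsymbol{x}' - \Phi_{\Delta t}(\boldsymbol{x}_t) = -\tfrac{\Delta t^2}{2}\,\dot{\boldsymbol{v}} + O(\Delta t^3)$, where $\dot{\boldsymbol{v}} = \partial_t \boldsymbol{v} + (\nabla \boldsymbol{v})\boldsymbol{v}$ is the material derivative. The Langevin drift is then evaluated at $\boldsymbol{x}'$ rather than $Y$. The resulting error is $\epsilon_t(\boldsymbol{s}(\boldsymbol{x}') - \boldsymbol{s}(Y)) = O(\epsilon_t\Delta t^2) = O(\Delta t^3)$, because $\epsilon_t = \sigma_t^2\Delta t/2$. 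The Euler--Maruyama local error of the exact Langevin step is $O(\epsilon_t)$ in drift and strong order $1$. By itself that is too crude, so we use the standard weak/strong refinement: with synchronous coupling, the leading term of the exact-minus-discrete Langevin step is $\epsilon_t^{3/2}\nabla \boldsymbol{s}\,\xi$-type plus $\epsilon_t^2$ terms. This gives a total leading error $\Delta t^2 \boldsymbol{a}_{\mathrm{lc}} + o(\Delta t^2)$ with $\boldsymbol{a}_{\mathrm{lc}} = -\tfrac12\dot{\boldsymbol{v}} + c_{\mathrm{lc}}(\sigma_t)$, where $c_{\mathrm{lc}}$ collects the $\sigma_t^3$-scaled Langevin terms evaluated at $\Delta t^{3/2}\cdot\Delta t^{1/2}$ order. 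We identify this $\boldsymbol{a}_{\mathrm{lc}}$ with the quantity in Eq.~\ref{eq:errors}.

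\textbf{Expansion of the SDE step.} The same computation for the Euler--Maruyama step against the exact reverse SDE gives $\Delta t^2\boldsymbol{a}_{\mathrm{sde}} + o(\Delta t^2)$. Here $\boldsymbol{a}_{\mathrm{sde}}$ is the drift-derivative term $-\tfrac12 \dot{\boldsymbol{b}}$, with $\boldsymbol{b} = \boldsymbol{v} + \tfrac{\sigma_t^2}{2}\boldsymbol{s}$, plus Itô correction terms $\tfrac{\sigma_t^2}{4}\Delta \boldsymbol{b}$ and the stochastic $\nabla \boldsymbol{b}\,\sigma_t\int W$ contribution. Since a synchronous coupling only gives an upper bound, we need to argue that the leading coefficients are sharp. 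We plan to do this with a two-sided expansion, $\W = \Delta t^2\,\|\boldsymbol{a}\|_{L^2(\text{optimal})} + o(\Delta t^2)$. The lower bound comes from linearizing the optimal map: for small perturbations, $\W(\law(Y + \Delta t^2 A), \law(Y))$ equals, to leading order, the $L^2$ norm of the gradient part of $A$ in the weighted Helmholtz decomposition. We therefore define both leading constants as these projected norms, as presumably done in Eq.~\ref{eq:errors}.

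\textbf{Conclusion and main obstacle.} The comparison condition Eq.~\ref{eq:lc_sde_cond_matched_noise} is exactly the statement $C_{\mathrm{lc}} < C_{\mathrm{sde}}$ for these leading constants. Once both expansions $\W_{\mathrm{lc}} = C_{\mathrm{lc}}\Delta t^2 + o(\Delta t^2)$ and $\W_{\mathrm{sde}} = C_{\mathrm{sde}}\Delta t^2 + o(\Delta t^2)$ are established, choosing $\Delta t$ small enough that the remainders are below $(C_{\mathrm{sde}} - C_{\mathrm{lc}})\Delta t^2/2$ yields the strict inequality. The main obstacle is the sharpness step: getting matching \emph{lower} and upper bounds on $\W$ at order $\Delta t^2$ when the errors contain mean-zero stochastic terms of order $\Delta t^{3/2}$. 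Such terms could in principle dominate under a bad coupling, but they must cancel under the optimal one because both laws share the same marginals to that order. We would first try to handle this by working at the level of densities: expand $\law(\boldsymbol{x}_{\mathrm{lc}})$ and $\law(\boldsymbol{x}_{\mathrm{sde}})$ as $p_{t-\Delta t}(1 + \Delta t^2 h + o(\Delta t^2))$, then use the $\dot H^{-1}(p_{t-\Delta t})$ asymptotics of $\W$ for small perturbations. If necessary, we would fall back on defining the leading errors directly through these $h$'s and stating the comparison condition in those terms.
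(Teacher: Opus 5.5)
There is a genuine gap. Your endgame matches the paper's: a two-sided expansion $\W=C\,\Delta t^2+o(\Delta t^2)$ for each step, then taking $\Delta t$ small. But the synchronous coupling you use cannot produce an order-$\Delta t^2$ bound, and the two leading constants are never computed.

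\textbf{The synchronous coupling is too coarse.} With $\epsilon_t=\sigma_t^2\Delta t/2\asymp\Delta t$, compare one unadjusted Langevin step with the exact Langevin diffusion driven by the same Brownian path. Their difference contains $\sqrt{2}\,\nabla\boldsymbol{s}\int_0^{\epsilon_t}B_\tau\,\mathrm{d}\tau$, whose $L^2$ norm is of order $\epsilon_t^{3/2}\asymp\Delta t^{3/2}$. Likewise, the Euler--Maruyama-versus-exact-SDE difference contains a term $\sigma_t\nabla\boldsymbol{b}\int W$ of the same order. A concrete check: take $p_r=\mathcal{N}(0,1)$ and start at stationarity. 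The synchronous coupling gives $\approx\sqrt{2/3}\,\epsilon^{3/2}$, while the true distance is $\sqrt{1+\epsilon^2}-1\approx\epsilon^2/2$. This is exactly the $\epsilon^{3/2}$ term of Lemma~\ref{lem:langevin}, which is harmless for Theorem~\ref{thm:corrector-improves} but fatal here. So your claimed ``total leading error $\Delta t^2\boldsymbol{a}_{\mathrm{lc}}+o(\Delta t^2)$'' is false for this coupling, and the coupling cannot supply the upper bound $C_{\mathrm{lc}}\Delta t^2+o(\Delta t^2)$ that your final comparison needs. Your ``$\sigma_t^3$-scaled'' Langevin term is precisely this mean-zero fluctuation; the genuine order-$\Delta t^2$ Langevin contribution is a bias of size $\epsilon_t^2\propto\sigma_t^4$.

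What you call a fallback is the necessary route, and it is the paper's route. Expand both laws weakly against test functions, where all cross terms with the centered, independent noise vanish in expectation. Then convert an expansion $\int f\,\mathrm{d}\mu_h-\int f\,\mathrm{d}p=-\frac{h^2}{2}\mathbb{E}_p\langle\boldsymbol{A},\nabla f\rangle+o(h^2)$, with $\boldsymbol{A}$ a gradient, into $\W(\mu_h,p)=\frac{h^2}{2}\nbr{\boldsymbol{A}}_{L^2(p)}+o(h^2)$. This is Lemma~\ref{lem:weak2W2}, whose lower bound is your linearization of the optimal map.

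\textbf{The constants are not identified.} You leave $\boldsymbol{a}_{\mathrm{lc}}=-\frac12\dot{\boldsymbol{v}}+c_{\mathrm{lc}}(\sigma_t)$ unspecified, and $\boldsymbol{a}_{\mathrm{sde}}$ as an unreduced mix of $\dot{\boldsymbol{b}}$, $\Delta\boldsymbol{b}$ and stochastic terms. So the claim that Eq.~\ref{eq:lc_sde_cond_matched_noise} is ``exactly'' $C_{\mathrm{lc}}<C_{\mathrm{sde}}$ is unsupported. Redefining the constants through the density perturbations would change the theorem rather than prove it. Two ideas are missing.
\begin{itemize}
\item \emph{Corrector bias at stationarity.} Split the corrector's weak error into the predictor error $-\frac{h^2}{2}\mathbb{E}_{p_t}\langle\dot{\boldsymbol{v}},\nabla f\rangle$ and the bias of one Langevin step started from its own target $p_r$. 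With $\mathcal{L}g:=\langle\boldsymbol{s}_r,\nabla g\rangle+\Delta g$, the $O(\epsilon)$ term vanishes because $\mathbb{E}_{p_r}\mathcal{L}g=0$. Subtracting $\frac12\mathbb{E}_{p_r}\mathcal{L}^2f=0$ and integrating by parts once leaves $\frac{\epsilon^2}{2}\mathbb{E}_{p_t}\langle\boldsymbol{G},\nabla f\rangle+O(h^3)$.
\item \emph{Compare the SDE step with the corrector, not with the exact SDE.} Take the corrector at $\epsilon=\sigma^2h/2$ driven by the same $\xi$. The two steps then differ only by the deterministic shift $\frac{\sigma^2h}{2}[\boldsymbol{s}_t(\boldsymbol{x})-\boldsymbol{s}_r(\boldsymbol{x}_{\mathrm{ode}})]=\frac{\sigma^2h^2}{2}\dot{\boldsymbol{s}}(\boldsymbol{x})+O(h^3)$. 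Hence the SDE coefficient is $\boldsymbol{E}-\sigma^2\dot{\boldsymbol{s}}$ against $\boldsymbol{E}$ for the corrector. Expanding $\nbr{\boldsymbol{E}-\sigma^2\dot{\boldsymbol{s}}}_{p_t}^2$ then yields Eq.~\ref{eq:lc_sde_cond_matched_noise} in one line.
\end{itemize}

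Your Helmholtz projection is then the identity, but you must check this. Here $\dot{\boldsymbol{v}}$, $\dot{\boldsymbol{s}}$ and $\boldsymbol{G}$ are gradient fields because $\boldsymbol{s}_t$ is a gradient and $\boldsymbol{v}_t=-(\boldsymbol{x}+t\boldsymbol{s}_t)/(1-t)$ is one too. Otherwise, constants defined as projected norms would not match a condition stated with the unprojected $\boldsymbol{E}$ and $\dot{\boldsymbol{s}}$.
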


Beyond the above theoretical result, Fig.~\ref{fig:rollout imgs} provides empirical verification. Note that at the same noise level $\eta$, the sum of the variance of the Gaussian noise over all timesteps is actually larger for Langevin than for the SDE, implying at least as much exploration; nevertheless, our Langevin-corrected sampling produces substantially clearer samples than standard SDE-based sampling.

\section{Experiments}
In this section, we empirically evaluate \ours on text-to-image and text-to-video generation.

\subsection{Experimental Setup}

\paragraph{Image Generation.}
We experiment with two flow-based text-to-image models, SD3.5-Medium~\citep{esser2024scaling} and FLUX.1-Dev~\citep{flux2024}, both at $512\times512$ resolution. These models cover two guidance regimes: SD3.5 uses classifier-free guidance (CFG), while FLUX.1-Dev is guidance-distilled and uses no CFG at inference. We consider three reward-optimization settings:
\begin{enumerate}[itemsep=0pt, parsep=0pt, topsep=0pt, leftmargin=12pt]
\item \textit{Visual text rendering (OCR).} This is a verifiable reward that measures how accurately the model renders a target string. We follow Flow-GRPO~\citep{liu2026flow} for both the reward-assignment strategy and the training and evaluation datasets.
\item \textit{Human preference alignment.} We use HPS-v2.1~\citep{wu2023human} as the reward model, a preference model trained on human comparisons to predict which generated image humans prefer for a given prompt. We use the training and evaluation prompts from Pick-a-Pic~\citep{kirstain2023pick}.
\item \textit{ Multi-reward optimization.} To preserve text-image alignment while improving human preference, we jointly optimize CLIP score~\citep{hessel2021clipscore} and HPS-v2.1. We use the same training and evaluation prompts as in the single-reward HPS-v2.1 setting.
\end{enumerate}

\paragraph{Video Generation.}
For text-to-video generation, we experiment with HunyuanVideo~\citep{kong2024hunyuanvideo}. Following \citet{xue2025dancegrpo}, we use VideoAlign~\citep{liu2026improving} as the reward model and optimize its visual aesthetic quality component. We also report its motion quality and text--video alignment components for evaluation. As in DanceGRPO, training prompts are curated from VidProM~\citep{wang2024vidprom}; the evaluation dataset is from \citet{zheng2026manifold}.

\paragraph{Quality Evaluation Metrics.}
When optimizing a target reward, we monitor quality metrics to detect reward-induced degradation or reward hacking. For images, we report two quality metrics, Aesthetic Score and CLIP score, and three human-preference metrics, ImageReward~\citep{xu2023imagereward}, HPS-v2.1, and PickScore~\citep{kirstain2023pick}. All image metrics are computed on DrawBench~\citep{saharia2022photorealistic}. For videos, we report VBench~\citep{huang2023vbench} as quality metrics.

\begin{table*}[t]
    \centering
    \caption{\textbf{Evaluation Results on Image Generation.}
    Task metric columns denote the reward(s) optimized in each setting. \textbf{Bold} is the best result. \ours consistently outperforms all baselines. }
    \resizebox{0.9\linewidth}{!}{
        \begin{tabular}{lcccccccc}
            \toprule
            \multirow{2}{*}{\textbf{Model}}
            & \multicolumn{3}{c}{\textbf{Task Metric}}
            & \multicolumn{5}{c}{\textbf{Quality Metric}} \\
            \cmidrule(lr){2-4} \cmidrule(lr){5-9}
            & \textbf{OCR} & \textbf{HPS-v2.1} & \textbf{CLIP}
            & \textbf{Aesthetic} & \textbf{CLIP} & \textbf{ImgRwd} & \textbf{HPS-v2.1} & \textbf{PickScore} \\ 
            \midrule \rowcolor{gray!35}
            \multicolumn{9}{l}{\textit{\textbf{SD3.5-M}}} \\
            \midrule
            SD3.5-M & 0.569 & 0.295 & 0.288 & 5.38 & 0.283 & 0.83 & 0.279 & 22.35 \\
            \midrule 
            \multicolumn{9}{l}{\textit{(a) Visual Text Rendering: OCR}} \\
            \midrule
            Flow-GRPO & 0.914 & — & — & 5.31 & 0.288 & 0.94 & \textbf{0.281} & 22.43 \\
            CPS       & 0.935 & — & — & 5.20 & 0.287 & 0.75 & 0.265 & 22.16 \\
            \ours     & \textbf{0.960} & — & — & \textbf{5.33} & \textbf{0.291} & \textbf{1.00} & 0.280 & \textbf{22.46} \\
            \midrule
            \multicolumn{9}{l}{\textit{(b) Human Preference Alignment: HPS-v2.1}} \\
            \midrule
            Flow-GRPO & — & 0.381 & — & \textbf{6.30} & 0.267 & 1.41 & 0.357 & 22.89 \\
            CPS       & — & 0.371 & — & 6.12 & 0.270 & 1.38 & 0.344 & 22.62 \\
            \ours     & — & \textbf{0.393} & — & 6.11 & \textbf{0.280} & \textbf{1.44} & \textbf{0.367} & \textbf{23.03} \\
            \midrule
            \multicolumn{9}{l}{\textit{(c) Multi-Reward: HPS-v2.1 $+$ CLIP Score}} \\
            \midrule
            Flow-GRPO & — & 0.351 & 0.296 & \textbf{5.97} & 0.289 & 1.28 & 0.327 & 22.76 \\
            CPS       & — & 0.345 & 0.291 & 5.73 & 0.285 & 1.31 & 0.323 & 22.68 \\
            \ours     & — & \textbf{0.356} & \textbf{0.302} & 5.73 & \textbf{0.297} & \textbf{1.33} & \textbf{0.332} & \textbf{22.93} \\
            \midrule \rowcolor{gray!35}
            \multicolumn{9}{l}{\textit{\textbf{FLUX.1-Dev}}} 
            \\
            \midrule
            FLUX.1-Dev & — & 0.303 & 0.276 &5.68 & 0.274 & 0.92 & 0.293  & 22.74 \\
            \midrule
            \multicolumn{8}{l}{\textit{(a) Human Preference Alignment: HPS-v2.1}} \\
            \midrule
            Flow-GRPO & — & 0.378 & — & \textbf{6.07} & 0.265 &\textbf{1.36} & 0.352 & 22.99 \\
            CPS     & —  & 0.373 & — & 6.03 & 0.266 &1.34 & 0.350 & 23.04 \\
            \ours    & — & \textbf{0.384} & — & 6.01 & \textbf{0.270} & \textbf{1.36}& \textbf{0.356} & \textbf{23.28} \\
            \midrule
            \multicolumn{8}{l}{\textit{(b) Multi-Reward: HPS-v2.1 $+$ CLIP Score}} \\
            \midrule
            Flow-GRPO & — & 0.371 & 0.279 &\textbf{5.94} &0.273 & 1.37&0.347 & 23.18 \\
            CPS    & —   & 0.370 & 0.284 &5.90 &0.281 &\textbf{1.44} &0.350 & 23.28 \\
            \ours   & —  & \textbf{0.376} & \textbf{0.285}& 5.88 &\textbf{0.284}  &1.43 &\textbf{0.352} & \textbf{23.40} \\
            \bottomrule
        \end{tabular}
    }
    \label{tab:img_res}
    \vspace{-3mm}
\end{table*}

\paragraph{Baselines.}
We compare \ours against online reinforcement learning baselines based on GRPO. These methods share the same GRPO objective and differ primarily in how rollouts are sampled. The first baseline uses the standard SDE sampler in Eq.~\ref{eq:discretize_sde}: Flow-GRPO~\citep{liu2026flow} for text-to-image generation and DanceGRPO~\citep{xue2025dancegrpo} for text-to-video generation. The second baseline, CPS~\citep{wang2025coefficients}, instead samples rollouts with DDIM~\citep{song2020denoising}.

\paragraph{Fair Comparison on Computation.}
Since the compared methods mainly differ in rollout sampling, we match or favor the baselines in rollout computation. Each Langevin-corrected sampler step requires two NFEs; therefore, for each baseline, we run both the same number of rollout steps and twice that number, and report the better result. For SD3.5-M, FLUX.1-Dev, and HunyuanVideo, \ours uses 10, 6, and 8 rollout steps during training, while the baselines use 10/20, 6/12, and 16 steps, respectively. For evaluation, all methods use the same sampler: 40-step ODE Euler for SD3.5-M, 28 steps for FLUX.1-Dev, and 50 steps for HunyuanVideo. Thus, the proposed method does not use more computation than the baselines. Other hyperparameters are kept nearly identical, with details in Appendix~\ref{app:exp_setup}.

\subsection{Main Results}

As shown in Tables~\ref{tab:img_res} and~\ref{tab:video_res}, \ours consistently outperforms the baselines on both image and video generation tasks while preserving sample quality across auxiliary metrics. Qualitative results are shown in Fig.~\ref{fig:qualitative imgs}, with additional examples provided in Appendix~\ref{app:additional_res}.

\begin{figure}[htbp]
\centering	\includegraphics[width=0.95\linewidth]{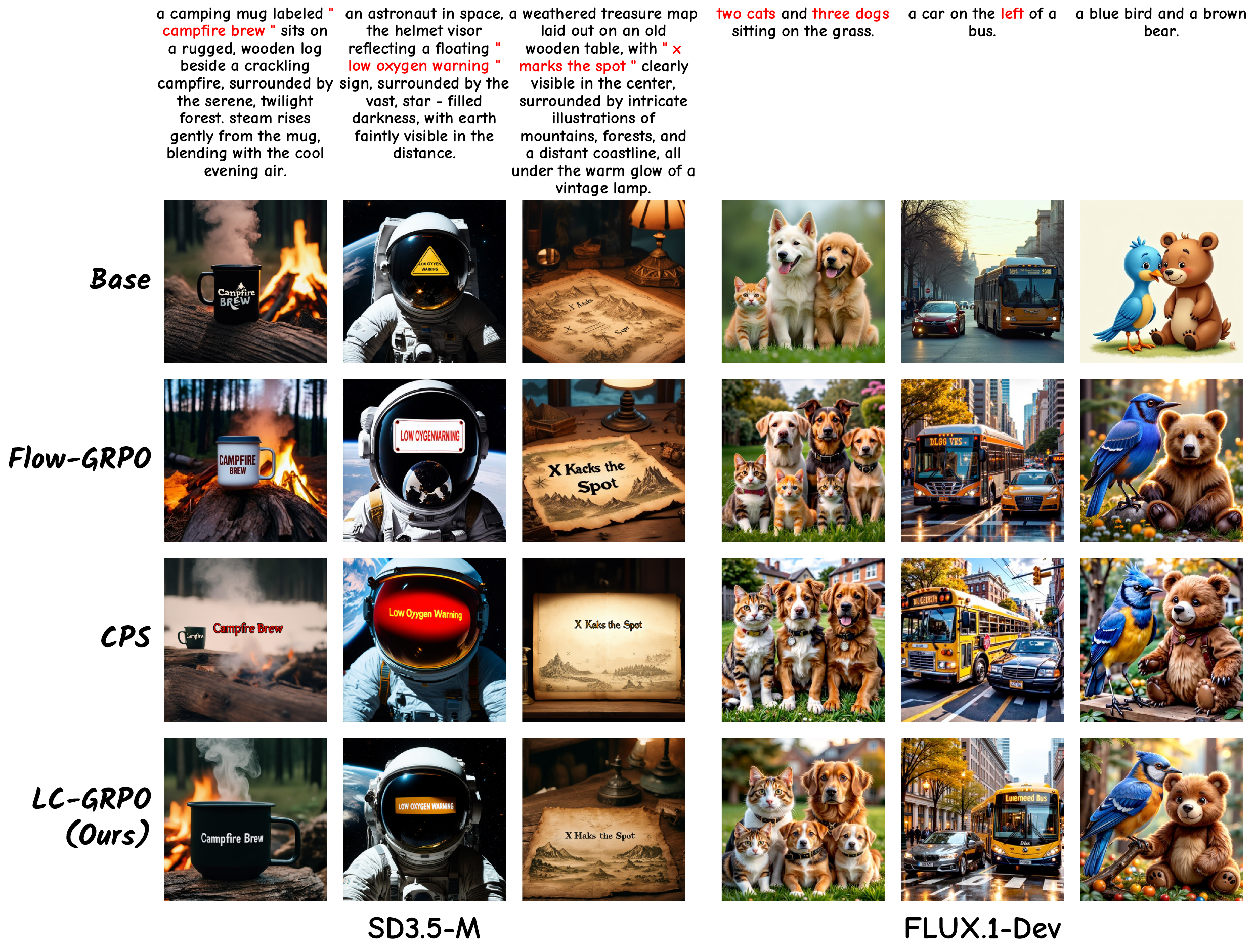}\\
	\caption{ \textbf{Qualitative Comparison}. Left: SD3.5-M-based models trained with OCR reward; Right: FLUX.1-Dev-based models trained with HPS-v2.1 and CLIP score. Prompts are taken from OCR and DrawBench, respectively.}\label{fig:qualitative imgs}
	\vspace{-.10in}
\end{figure}

\paragraph{Gap between Training and Evaluation.}
We examine the gap between rewards on RL training rollouts and those obtained during evaluation inference. Figure~\ref{fig:close_gap} compares training rewards, computed on rollouts, with evaluation rewards, computed with ODE Euler sampling on the evaluation dataset throughout training. Flow-GRPO tends to produce blurry, lower-quality rollouts, so its training rewards are often lower than those of cleaner evaluation samples. In contrast, CPS improves rollout quality but leads to a large train--evaluation gap, with high training rewards dropping during inference. \ours also improves rollout quality while maintaining a smaller train--test gap.

\begin{table*}[htbp]
    \centering
   \caption{\textbf{Evaluation Results on Video Generation.} HunyuanVideo is trained with the \textbf{Visual Quality} reward from VideoAlign. \textbf{Bold} indicates the best result. }
    \resizebox{0.9\linewidth}{!}{
        \begin{tabular}{lcccccccc}
            \toprule
            \multirow{3}{*}{\textbf{Model}}
            & \multicolumn{3}{c}{\textbf{VideoAlign}}
            & \multicolumn{5}{c}{\textbf{VBench}} \\
            \cmidrule(lr){2-4} \cmidrule(lr){5-9}
            & \makecell{Visual\\Quality}
            & \makecell{Motion\\Quality}
            & \makecell{Text\\Alignment}
            & \makecell{Total\\Score}
            & \makecell{Quality\\Score}
            & \makecell{Semantic\\Score}
            & \makecell{Dynamic\\Degree}
            & \makecell{Aesthetic\\Quality}\\ 
            \midrule 
            HunyuanVideo & -0.368 & - 0.351 & {1.247} & 78.74 & 80.94 & \textbf{69.96} & \textbf{52.8} & 61.28 \\ \midrule
            DanceGRPO & -0.205 & -0.279 & \textbf{1.312} &  78.85 & 81.18 & 69.51 & 12.5 & 64.03 \\
            CPS & -0.276 & \textbf{-0.178} & {1.048} & 76.86 & 79.70 & {65.49} & 19.4 & 59.02  \\
            \ours  & \textbf{0.063} & -0.223 & {1.151} & \textbf{79.10} & \textbf{81.92} & {67.83} & {45.8} & \textbf{64.70}\\ 
            \bottomrule
        \end{tabular}
    }
    \label{tab:video_res}
\end{table*}

\subsection{Ablation Studies}

In this section, we analyze the design choices of the Langevin correction step: 
\textbf{(a) Gradients through Langevin Correction.} Since the Langevin correction step is not used at test time, we study whether gradients should be propagated through it during training. As shown in Fig.~\ref{fig:ablation_grad_cfg}, backpropagating through the Langevin step leads to better performance and more stable training. We therefore keep this gradient path in all experiments. 
\textbf{(b) Classifier-Free Guidance.} Classifier-free guidance (CFG) is commonly used to improve sample quality, raising the question of whether it should also be applied in the Langevin correction step. However, the main role of Langevin correction is to introduce stochasticity and define a sampling distribution, rather than directly steering samples toward higher visual quality. We therefore ablate this design choice. As shown in Fig.~\ref{fig:ablation_grad_cfg}(b), applying Langevin correction without CFG achieves better performance on SD3.5-M while also reducing computation.

\begin{figure}[htbp]
  \centering
  \begin{minipage}[t]{0.45\textwidth}
    \centering
    \includegraphics[width=0.8\linewidth]{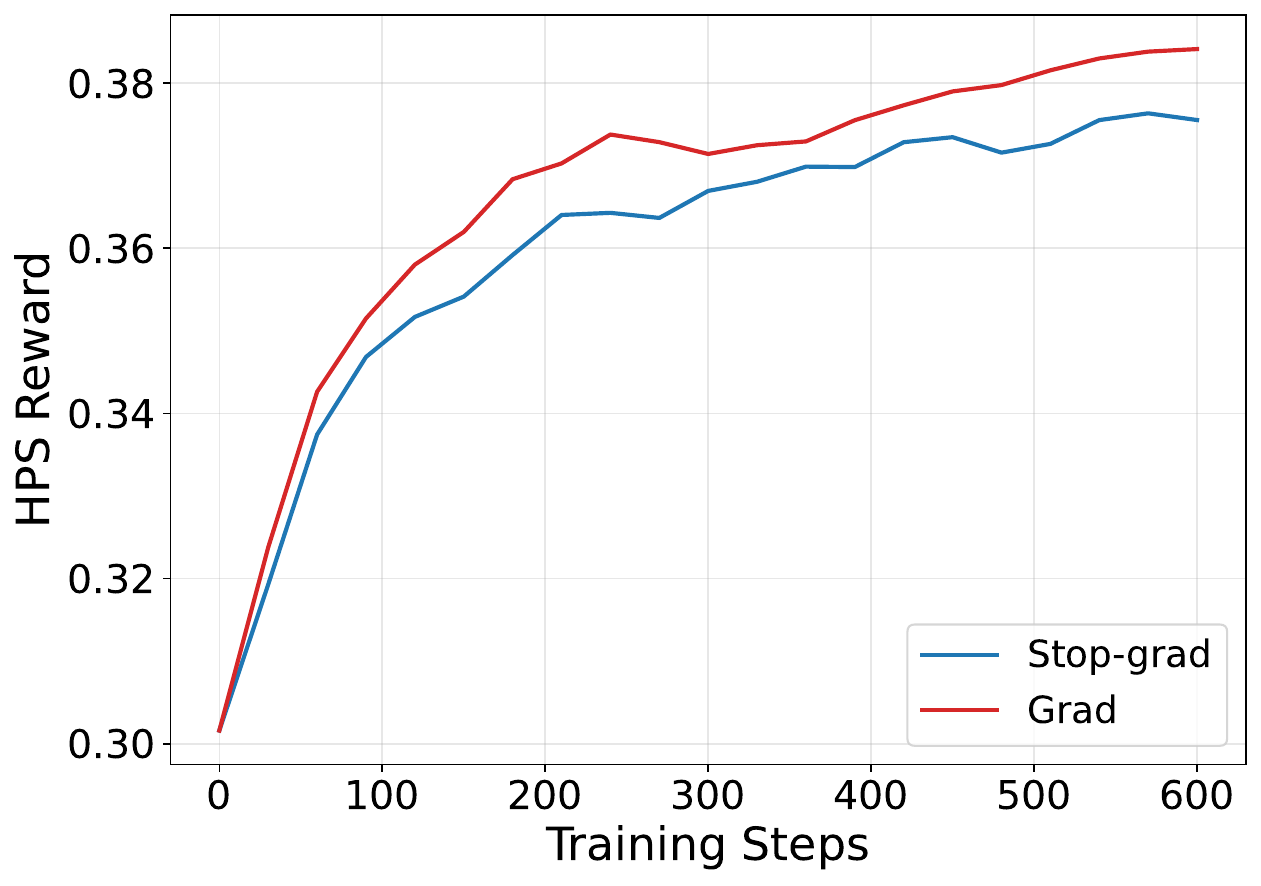}
    \caption*{\footnotesize (a) FLUX.1-Dev: gradients in Langevin step  }
  \end{minipage}
  \hfill
  \begin{minipage}[t]{0.45\textwidth}
    \centering
    \includegraphics[width=0.8\linewidth]{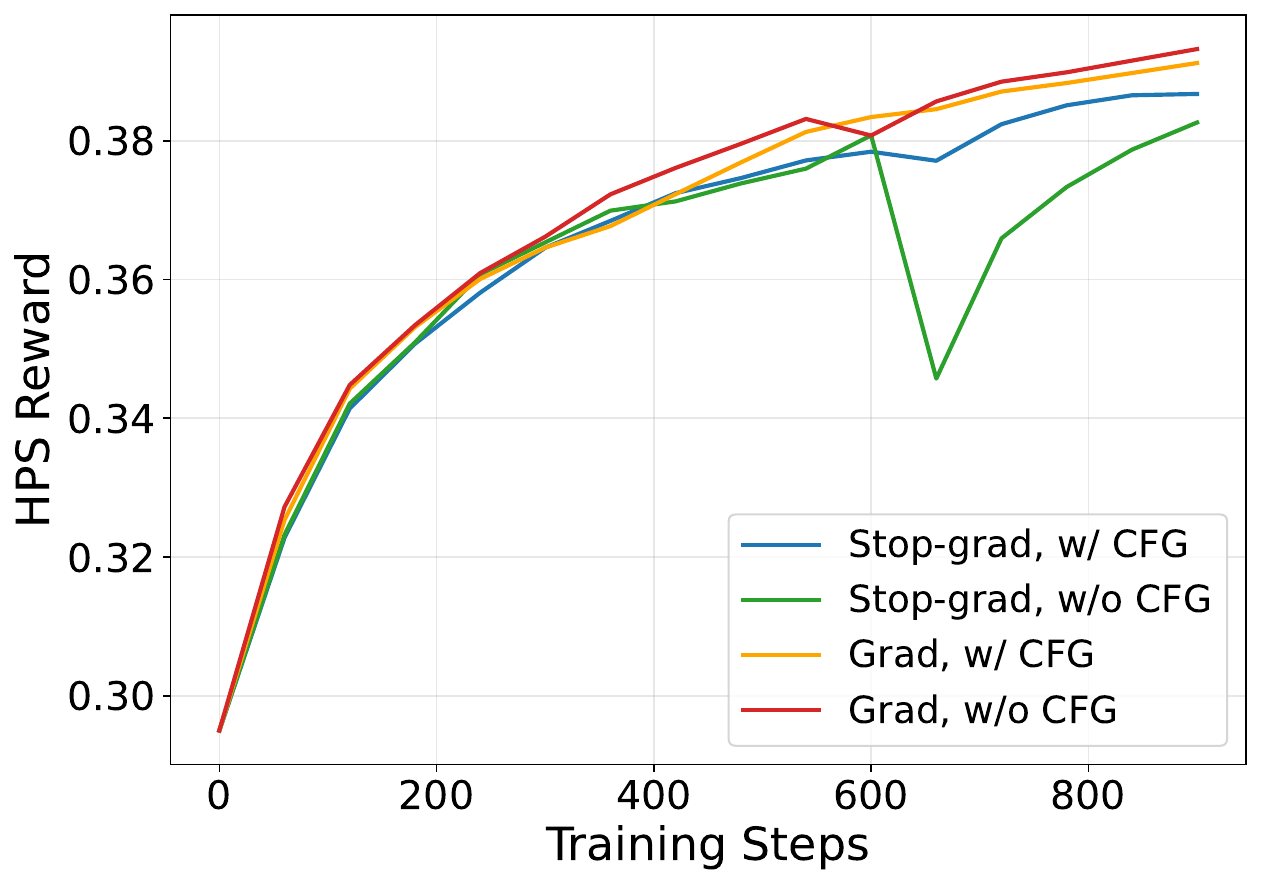}
    \caption*{\footnotesize (b) SD3.5-M: gradients and CFG in Langevin step  }
  \end{minipage}
  \caption{
  Effect of design choices in the Langevin correction step.
  }
  \vspace{-4mm}
  \label{fig:ablation_grad_cfg}
\end{figure}

\section{Conclusion}
In flow-based reinforcement learning, the need for stochasticity in the action policy often motivates the use of SDE discretization during training rollouts, which in turn creates a mismatch with the ODE-based inference procedure. To mitigate this issue, we propose \ours, which combines inference-aligned ODE transport with a distribution-preserving Langevin correction to produce rollouts that are accurate, stochastic, and likelihood tractable for reinforcement learning. Provably, this sampling method is more accurate than SDE sampling while preserving the same level of noise, thereby maintaining the extent of exploration in RL.
  Experiments on text-to-image and text-to-video alignment benchmarks show that our method consistently improves rollout quality and reward alignment, validating its effectiveness in practice.

\bibliography{references}

\begin{thebibliography}{50}
\providecommand{\natexlab}[1]{#1}
\providecommand{\url}[1]{\texttt{#1}}
\expandafter\ifx\csname urlstyle\endcsname\relax
  \providecommand{\doi}[1]{doi: #1}\else
  \providecommand{\doi}{doi: \begingroup \urlstyle{rm}\Url}\fi

\bibitem[Ambrosio et~al.(2005)Ambrosio, Gigli, and Savar{\'e}]{ambrosio2005gradient}
Luigi Ambrosio, Nicola Gigli, and Giuseppe Savar{\'e}.
\newblock \emph{Gradient flows: in metric spaces and in the space of probability measures}.
\newblock Springer, 2005.

\bibitem[Black et~al.(2024)Black, Janner, Du, Kostrikov, and Levine]{black2024training}
Kevin Black, Michael Janner, Yilun Du, Ilya Kostrikov, and Sergey Levine.
\newblock Training diffusion models with reinforcement learning.
\newblock In \emph{International Conference on Learning Representations}, volume 2024, pp.\  4965--4987, 2024.

\bibitem[Chung et~al.(2022)Chung, Kim, Mccann, Klasky, and Ye]{chung2022diffusion}
Hyungjin Chung, Jeongsol Kim, Michael~T Mccann, Marc~L Klasky, and Jong~Chul Ye.
\newblock Diffusion posterior sampling for general noisy inverse problems.
\newblock \emph{arXiv preprint arXiv:2209.14687}, 2022.

\bibitem[Clark et~al.(2024)Clark, Vicol, Swersky, and Fleet]{clark2024directly}
Kevin Clark, Paul Vicol, Kevin Swersky, and David Fleet.
\newblock Directly fine-tuning diffusion models on differentiable rewards.
\newblock In \emph{International Conference on Learning Representations}, volume 2024, pp.\  4793--4822, 2024.

\bibitem[Deng et~al.(2026)Deng, Yan, Mao, Wang, Liu, Gao, and Sang]{deng2026densegrpo}
Haoyou Deng, Keyu Yan, Chaojie Mao, Xiang Wang, Yu~Liu, Changxin Gao, and Nong Sang.
\newblock Densegrpo: From sparse to dense reward for flow matching model alignment.
\newblock \emph{arXiv preprint arXiv:2601.20218}, 2026.

\bibitem[Dhariwal \& Nichol(2021)Dhariwal and Nichol]{dhariwal2021diffusion}
Prafulla Dhariwal and Alexander Nichol.
\newblock Diffusion models beat gans on image synthesis.
\newblock \emph{Advances in neural information processing systems}, 34:\penalty0 8780--8794, 2021.

\bibitem[Ding \& Ye(2025)Ding and Ye]{ding2025treegrpo}
Zheng Ding and Weirui Ye.
\newblock Treegrpo: Tree-advantage grpo for online rl post-training of diffusion models.
\newblock \emph{arXiv preprint arXiv:2512.08153}, 2025.

\bibitem[Dong et~al.(2023)Dong, Xiong, Goyal, Zhang, Chow, Pan, Diao, Zhang, Shum, and Zhang]{dong2023raft}
Hanze Dong, Wei Xiong, Deepanshu Goyal, Yihan Zhang, Winnie Chow, Rui Pan, Shizhe Diao, Jipeng Zhang, Kashun Shum, and Tong Zhang.
\newblock Raft: Reward ranked finetuning for generative foundation model alignment.
\newblock \emph{arXiv preprint arXiv:2304.06767}, 2023.

\bibitem[Efron(2011)]{efron2011tweedie}
Bradley Efron.
\newblock Tweedie’s formula and selection bias.
\newblock \emph{Journal of the American Statistical Association}, 106\penalty0 (496):\penalty0 1602--1614, 2011.

\bibitem[Esser et~al.(2024)Esser, Kulal, Blattmann, Entezari, M{\"u}ller, Saini, Levi, Lorenz, Sauer, Boesel, et~al.]{esser2024scaling}
Patrick Esser, Sumith Kulal, Andreas Blattmann, Rahim Entezari, Jonas M{\"u}ller, Harry Saini, Yam Levi, Dominik Lorenz, Axel Sauer, Frederic Boesel, et~al.
\newblock Scaling rectified flow transformers for high-resolution image synthesis.
\newblock In \emph{Forty-first international conference on machine learning}, 2024.

\bibitem[Fan et~al.(2023)Fan, Watkins, Du, Liu, Ryu, Boutilier, Abbeel, Ghavamzadeh, Lee, and Lee]{fan2023dpok}
Ying Fan, Olivia Watkins, Yuqing Du, Hao Liu, Moonkyung Ryu, Craig Boutilier, Pieter Abbeel, Mohammad Ghavamzadeh, Kangwook Lee, and Kimin Lee.
\newblock Dpok: Reinforcement learning for fine-tuning text-to-image diffusion models.
\newblock \emph{Advances in Neural Information Processing Systems}, 36:\penalty0 79858--79885, 2023.

\bibitem[Guo et~al.(2024)Guo, Yuan, Yang, Chen, and Wang]{guo2024gradient}
Yingqing Guo, Hui Yuan, Yukang Yang, Minshuo Chen, and Mengdi Wang.
\newblock Gradient guidance for diffusion models: An optimization perspective.
\newblock \emph{arXiv preprint arXiv:2404.14743}, 2024.

\bibitem[Guo et~al.(2026)Guo, Yang, Yuan, and Wang]{guo2026training}
Yingqing Guo, Yukang Yang, Hui Yuan, and Mengdi Wang.
\newblock Training-free guidance beyond differentiability: Scalable path steering with tree search in diffusion and flow models.
\newblock \emph{Advances in Neural Information Processing Systems}, 38:\penalty0 73343--73384, 2026.

\bibitem[Hessel et~al.(2021)Hessel, Holtzman, Forbes, Le~Bras, and Choi]{hessel2021clipscore}
Jack Hessel, Ari Holtzman, Maxwell Forbes, Ronan Le~Bras, and Yejin Choi.
\newblock Clipscore: A reference-free evaluation metric for image captioning.
\newblock In \emph{Proceedings of the 2021 conference on empirical methods in natural language processing}, pp.\  7514--7528, 2021.

\bibitem[Huang et~al.(2024)Huang, He, Yu, Zhang, Si, Jiang, Zhang, Wu, Jin, Chanpaisit, Wang, Chen, Wang, Lin, Qiao, and Liu]{huang2023vbench}
Ziqi Huang, Yinan He, Jiashuo Yu, Fan Zhang, Chenyang Si, Yuming Jiang, Yuanhan Zhang, Tianxing Wu, Qingyang Jin, Nattapol Chanpaisit, Yaohui Wang, Xinyuan Chen, Limin Wang, Dahua Lin, Yu~Qiao, and Ziwei Liu.
\newblock {VBench}: Comprehensive benchmark suite for video generative models.
\newblock In \emph{Proceedings of the IEEE/CVF Conference on Computer Vision and Pattern Recognition}, 2024.

\bibitem[Kirstain et~al.(2023)Kirstain, Polyak, Singer, Matiana, Penna, and Levy]{kirstain2023pick}
Yuval Kirstain, Adam Polyak, Uriel Singer, Shahbuland Matiana, Joe Penna, and Omer Levy.
\newblock Pick-a-pic: An open dataset of user preferences for text-to-image generation.
\newblock \emph{Advances in neural information processing systems}, 36:\penalty0 36652--36663, 2023.

\bibitem[Kloeden et~al.(2012)Kloeden, Platen, and Schurz]{kloeden2012numerical}
Peter~Eris Kloeden, Eckhard Platen, and Henri Schurz.
\newblock \emph{Numerical solution of SDE through computer experiments}.
\newblock Springer Science \& Business Media, 2012.

\bibitem[Kong et~al.(2024)Kong, Tian, Zhang, Min, Dai, Zhou, Xiong, Li, Wu, Zhang, et~al.]{kong2024hunyuanvideo}
Weijie Kong, Qi~Tian, Zijian Zhang, Rox Min, Zuozhuo Dai, Jin Zhou, Jiangfeng Xiong, Xin Li, Bo~Wu, Jianwei Zhang, et~al.
\newblock Hunyuanvideo: A systematic framework for large video generative models.
\newblock \emph{arXiv preprint arXiv:2412.03603}, 2024.

\bibitem[Labs(2024)]{flux2024}
Black~Forest Labs.
\newblock Flux.
\newblock \url{https://github.com/black-forest-labs/flux}, 2024.

\bibitem[Lee et~al.(2023)Lee, Liu, Ryu, Watkins, Du, Boutilier, Abbeel, Ghavamzadeh, and Gu]{lee2023aligning}
Kimin Lee, Hao Liu, Moonkyung Ryu, Olivia Watkins, Yuqing Du, Craig Boutilier, Pieter Abbeel, Mohammad Ghavamzadeh, and Shixiang~Shane Gu.
\newblock Aligning text-to-image models using human feedback.
\newblock \emph{arXiv preprint arXiv:2302.12192}, 2023.

\bibitem[Li et~al.(2025)Li, Cui, Huang, Ma, Fan, Cheng, Yang, Zhong, and Bo]{li2025mixgrpo}
Junzhe Li, Yutao Cui, Tao Huang, Yinping Ma, Chun Fan, Yiming Cheng, Miles Yang, Zhao Zhong, and Liefeng Bo.
\newblock Mixgrpo: Unlocking flow-based grpo efficiency with mixed ode-sde.
\newblock \emph{arXiv preprint arXiv:2507.21802}, 2025.

\bibitem[Liang et~al.(2026)Liang, Tang, Ma, He, Wang, Li, Huang, Su, Liu, Zheng, et~al.]{liang2026mirage}
Jing Liang, Hongyao Tang, Yi~Ma, Yancheng He, Weixun Wang, Xiaoyang Li, Ju~Huang, Wenbo Su, Jinyi Liu, Yan Zheng, et~al.
\newblock The mirage of optimizing training policies: Monotonic inference policies as the real objective for llm reinforcement learning.
\newblock \emph{arXiv preprint arXiv:2606.29526}, 2026.

\bibitem[Lipman et~al.(2022)Lipman, Chen, Ben-Hamu, Nickel, and Le]{lipman2022flow}
Yaron Lipman, Ricky~TQ Chen, Heli Ben-Hamu, Maximilian Nickel, and Matt Le.
\newblock Flow matching for generative modeling.
\newblock \emph{arXiv preprint arXiv:2210.02747}, 2022.

\bibitem[Liu et~al.(2025)Liu, Li, Fu, Wang, Liu, and Shen]{liu2025speed}
Jiacai Liu, Yingru Li, Yuqian Fu, Jiawei Wang, Qian Liu, and Yu~Shen.
\newblock When speed kills stability: Demystifying rl collapse from the training-inference mismatch.
\newblock \emph{Notion Blog}, 2025.

\bibitem[Liu et~al.(2026{\natexlab{a}})Liu, Liu, Liang, Li, Liu, Wang, Wan, Zhang, and Ouyang]{liu2026flow}
Jie Liu, Gongye Liu, Jiajun Liang, Yangguang Li, Jiaheng Liu, Xintao Wang, Pengfei Wan, Di~Zhang, and Wanli Ouyang.
\newblock Flow-grpo: Training flow matching models via online rl.
\newblock \emph{Advances in neural information processing systems}, 38:\penalty0 40783--40818, 2026{\natexlab{a}}.

\bibitem[Liu et~al.(2026{\natexlab{b}})Liu, Liu, Liang, Yuan, Liu, Zheng, Wu, Wang, Xia, Wang, et~al.]{liu2026improving}
Jie Liu, Gongye Liu, Jiajun Liang, Ziyang Yuan, Xiaokun Liu, Mingwu Zheng, Xiele Wu, Qiulin Wang, Menghan Xia, Xintao Wang, et~al.
\newblock Improving video generation with human feedback.
\newblock \emph{Advances in Neural Information Processing Systems}, 38:\penalty0 82155--82192, 2026{\natexlab{b}}.

\bibitem[Liu et~al.(2022)Liu, Gong, and Liu]{liu2022flow}
Xingchao Liu, Chengyue Gong, and Qiang Liu.
\newblock Flow straight and fast: Learning to generate and transfer data with rectified flow.
\newblock \emph{arXiv preprint arXiv:2209.03003}, 2022.

\bibitem[Otto(2001)]{otto2001geometry}
Felix Otto.
\newblock The geometry of dissipative evolution equations: the porous medium equation.
\newblock 2001.

\bibitem[Prabhudesai et~al.(2023)Prabhudesai, Goyal, Pathak, and Fragkiadaki]{prabhudesai2023aligning}
Mihir Prabhudesai, Anirudh Goyal, Deepak Pathak, and Katerina Fragkiadaki.
\newblock Aligning text-to-image diffusion models with reward backpropagation.
\newblock 2023.

\bibitem[Qi et~al.(2025)Qi, Liu, Zhou, Pang, Du, Lee, and Lin]{qi2025defeating}
Penghui Qi, Zichen Liu, Xiangxin Zhou, Tianyu Pang, Chao Du, Wee~Sun Lee, and Min Lin.
\newblock Defeating the training-inference mismatch via fp16.
\newblock \emph{arXiv preprint arXiv:2510.26788}, 2025.

\bibitem[Qiu et~al.(2026)Qiu, Yu, Zhang, Zhang, Huang, Yang, and Lai]{qiu2026fp8}
Zhaopeng Qiu, Shuang Yu, Jingqi Zhang, Shuai Zhang, Xue Huang, Jingyi Yang, and Junjie Lai.
\newblock Fp8-rl: A practical and stable low-precision stack for llm reinforcement learning.
\newblock \emph{arXiv preprint arXiv:2601.18150}, 2026.

\bibitem[Rafailov et~al.(2023)Rafailov, Sharma, Mitchell, Manning, Ermon, and Finn]{rafailov2023direct}
Rafael Rafailov, Archit Sharma, Eric Mitchell, Christopher~D Manning, Stefano Ermon, and Chelsea Finn.
\newblock Direct preference optimization: Your language model is secretly a reward model.
\newblock \emph{Advances in neural information processing systems}, 36:\penalty0 53728--53741, 2023.

\bibitem[Saharia et~al.(2022)Saharia, Chan, Saxena, Li, Whang, Denton, Ghasemipour, Gontijo~Lopes, Karagol~Ayan, Salimans, et~al.]{saharia2022photorealistic}
Chitwan Saharia, William Chan, Saurabh Saxena, Lala Li, Jay Whang, Emily~L Denton, Kamyar Ghasemipour, Raphael Gontijo~Lopes, Burcu Karagol~Ayan, Tim Salimans, et~al.
\newblock Photorealistic text-to-image diffusion models with deep language understanding.
\newblock \emph{Advances in neural information processing systems}, 35:\penalty0 36479--36494, 2022.

\bibitem[Savani et~al.(2026)Savani, Kveton, Liu, Wang, Shi, Mukherjee, Vlassis, and Singh]{savani2026stepwise}
Yash Savani, Branislav Kveton, Yuchen Liu, Yilin Wang, Jing Shi, Subhojyoti Mukherjee, Nikos Vlassis, and Krishna~Kumar Singh.
\newblock Stepwise credit assignment for grpo on flow-matching models.
\newblock In \emph{Proceedings of the IEEE/CVF Conference on Computer Vision and Pattern Recognition}, pp.\  42007--42017, 2026.

\bibitem[Song et~al.(2020{\natexlab{a}})Song, Meng, and Ermon]{song2020denoising}
Jiaming Song, Chenlin Meng, and Stefano Ermon.
\newblock Denoising diffusion implicit models.
\newblock \emph{arXiv preprint arXiv:2010.02502}, 2020{\natexlab{a}}.

\bibitem[Song et~al.(2020{\natexlab{b}})Song, Sohl-Dickstein, Kingma, Kumar, Ermon, and Poole]{song2020score}
Yang Song, Jascha Sohl-Dickstein, Diederik~P Kingma, Abhishek Kumar, Stefano Ermon, and Ben Poole.
\newblock Score-based generative modeling through stochastic differential equations.
\newblock \emph{arXiv preprint arXiv:2011.13456}, 2020{\natexlab{b}}.

\bibitem[Wallace et~al.(2024)Wallace, Dang, Rafailov, Zhou, Lou, Purushwalkam, Ermon, Xiong, Joty, and Naik]{wallace2024diffusion}
Bram Wallace, Meihua Dang, Rafael Rafailov, Linqi Zhou, Aaron Lou, Senthil Purushwalkam, Stefano Ermon, Caiming Xiong, Shafiq Joty, and Nikhil Naik.
\newblock Diffusion model alignment using direct preference optimization.
\newblock In \emph{Proceedings of the IEEE/CVF Conference on Computer Vision and Pattern Recognition}, pp.\  8228--8238, 2024.

\bibitem[Wan et~al.(2025)Wan, Wang, Ai, Wen, Mao, Xie, Chen, Yu, Zhao, Yang, et~al.]{wan2025wan}
Team Wan, Ang Wang, Baole Ai, Bin Wen, Chaojie Mao, Chen-Wei Xie, Di~Chen, Feiwu Yu, Haiming Zhao, Jianxiao Yang, et~al.
\newblock Wan: Open and advanced large-scale video generative models.
\newblock \emph{arXiv preprint arXiv:2503.20314}, 2025.

\bibitem[Wang \& Yu(2025)Wang and Yu]{wang2025coefficients}
Feng Wang and Zihao Yu.
\newblock Coefficients-preserving sampling for reinforcement learning with flow matching.
\newblock \emph{arXiv preprint arXiv:2509.05952}, 2025.

\bibitem[Wang \& Yang(2024)Wang and Yang]{wang2024vidprom}
Wenhao Wang and Yi~Yang.
\newblock Vidprom: A million-scale real prompt-gallery dataset for text-to-video diffusion models.
\newblock \emph{Advances in Neural Information Processing Systems}, 37:\penalty0 65618--65642, 2024.

\bibitem[Wasti et~al.()Wasti, Ye, Rao, Goin, et~al.]{wasti2025no}
Bram Wasti, Wentao Ye, Teja Rao, Michael Goin, et~al.
\newblock No more train-inference mismatch: Bitwise consistent on-policy reinforcement learning with vllm and torchtitan. 2025.
\newblock \emph{URl: https://blog. vllm. ai/2025/11/10/bitwise-consistent-train-inference. html}.

\bibitem[Wu et~al.(2025)Wu, Li, Zhou, Lin, Gao, Yan, Yin, Bai, Xu, Chen, et~al.]{wu2025qwen}
Chenfei Wu, Jiahao Li, Jingren Zhou, Junyang Lin, Kaiyuan Gao, Kun Yan, Sheng-ming Yin, Shuai Bai, Xiao Xu, Yilei Chen, et~al.
\newblock Qwen-image technical report.
\newblock \emph{arXiv preprint arXiv:2508.02324}, 2025.

\bibitem[Wu et~al.(2023)Wu, Sun, Zhu, Zhao, and Li]{wu2023human}
Xiaoshi Wu, Keqiang Sun, Feng Zhu, Rui Zhao, and Hongsheng Li.
\newblock Human preference score: Better aligning text-to-image models with human preference.
\newblock In \emph{Proceedings of the IEEE/CVF International Conference on Computer Vision}, pp.\  2096--2105, 2023.

\bibitem[Xu et~al.(2023)Xu, Liu, Wu, Tong, Li, Ding, Tang, and Dong]{xu2023imagereward}
Jiazheng Xu, Xiao Liu, Yuchen Wu, Yuxuan Tong, Qinkai Li, Ming Ding, Jie Tang, and Yuxiao Dong.
\newblock Imagereward: Learning and evaluating human preferences for text-to-image generation.
\newblock \emph{Advances in Neural Information Processing Systems}, 36:\penalty0 15903--15935, 2023.

\bibitem[Xue et~al.(2025)Xue, Wu, Gao, Kong, Zhu, Chen, Liu, Liu, Guo, Huang, et~al.]{xue2025dancegrpo}
Zeyue Xue, Jie Wu, Yu~Gao, Fangyuan Kong, Lingting Zhu, Mengzhao Chen, Zhiheng Liu, Wei Liu, Qiushan Guo, Weilin Huang, et~al.
\newblock Dancegrpo: Unleashing grpo on visual generation.
\newblock \emph{arXiv preprint arXiv:2505.07818}, 2025.

\bibitem[Yang et~al.(2024)Yang, Tao, Lyu, Ge, Chen, Shen, Zhu, and Li]{yang2024using}
Kai Yang, Jian Tao, Jiafei Lyu, Chunjiang Ge, Jiaxin Chen, Weihan Shen, Xiaolong Zhu, and Xiu Li.
\newblock Using human feedback to fine-tune diffusion models without any reward model.
\newblock In \emph{Proceedings of the IEEE/CVF Conference on Computer Vision and Pattern Recognition}, pp.\  8941--8951, 2024.

\bibitem[Yao et~al.(2025)Yao, Liu, Zhang, Dong, Shang, and Gao]{yao2025your}
Feng Yao, Liyuan Liu, Dinghuai Zhang, Chengyu Dong, Jingbo Shang, and Jianfeng Gao.
\newblock Your efficient rl framework secretly brings you off-policy rl training, august 2025.
\newblock \emph{URL https://fengyao. notion. site/off-policy-rl}, 2025.

\bibitem[Yuan et~al.(2023)Yuan, Huang, Ni, Chen, and Wang]{yuan2023reward}
Hui Yuan, Kaixuan Huang, Chengzhuo Ni, Minshuo Chen, and Mengdi Wang.
\newblock Reward-directed conditional diffusion: Provable distribution estimation and reward improvement.
\newblock \emph{Advances in Neural Information Processing Systems}, 36:\penalty0 60599--60635, 2023.

\bibitem[Zheng et~al.(2025)Zheng, Chen, Ye, Wang, Zhang, Jiang, Su, Ermon, Zhu, and Liu]{zheng2025diffusionnft}
Kaiwen Zheng, Huayu Chen, Haotian Ye, Haoxiang Wang, Qinsheng Zhang, Kai Jiang, Hang Su, Stefano Ermon, Jun Zhu, and Ming-Yu Liu.
\newblock Diffusionnft: Online diffusion reinforcement with forward process.
\newblock \emph{arXiv preprint arXiv:2509.16117}, 2025.

\bibitem[Zheng et~al.(2026)Zheng, Kong, Wu, Jiang, Ma, He, Lin, Gong, Zhong, Bo, et~al.]{zheng2026manifold}
Mingzhe Zheng, Weijie Kong, Yue Wu, Dengyang Jiang, Yue Ma, Xuanhua He, Bin Lin, Kaixiong Gong, Zhao Zhong, Liefeng Bo, et~al.
\newblock Manifold-aware exploration for reinforcement learning in video generation.
\newblock \emph{arXiv preprint arXiv:2603.21872}, 2026.

\end{thebibliography}
\bibliographystyle{iclr/iclr2026_conference}

\appendix

\section{Theoretical Analysis}\label{appx:theory}

We focus on a single denoising step of the flow model, running from time $t$ to $r := t - h$ with step size $h > 0$. The probability flow ODE is
\begin{equation}\label{eq:pf-ode}
    \mathrm{d}\boldsymbol{x}_u = \boldsymbol{v}_u\rbr{\boldsymbol{x}_u}\,\mathrm{d}u,
\end{equation}
with marginals $p_u$ for $u \in [r, t]$.

Under this assumption, we show that a single Langevin step provably improves upon the Euler step. This result is stated formally as Theorem~\ref{thm:corrector-improves}, with the proof given in Sec.~\ref{proof:corrector-improves}.

\begin{assumption}\label{ass:slc}
Let $f := \log p_r$. Then $-L I_d \preceq \nabla^2 f \preceq -\alpha I_d$ for some $0 < \alpha \le L$; that is, $f$ is $\alpha$-strongly concave and $\nabla f$ is $L$-Lipschitz.
\end{assumption}
\begin{theorem}[Langevin Step Improves the Euler Step]\label{thm:corrector-improves}
Let Assumption~\ref{ass:slc} hold. Starting from $\boldsymbol{x} \sim q_t$ at time $t$, take one backward Euler step of the probability flow ODE Eq.~\ref{eq:pf-ode} with step size $h > 0$,
\begin{equation*}
    \boldsymbol{x}_{\mathrm{ode}} = \boldsymbol{x} - h\,\boldsymbol{v}_t(\boldsymbol{x}),
\end{equation*}
and denote the resulting error by $\varepsilon := \W\big(\law(\boldsymbol{x}_{\mathrm{ode}}),\, p_r\big)$, assumed positive. Then follow with one Langevin correction (LC) step targeting $p_r$,
\begin{equation*}
    \boldsymbol{x}_{\mathrm{lc}} = \boldsymbol{x}_{\mathrm{ode}} + \epsilon\, \nabla \log p_r(\boldsymbol{x}_{\mathrm{ode}}) + \sqrt{2\epsilon}\,\xi,
    \qquad \xi \sim \mathcal{N}(0, I_d), \quad \xi \perp \boldsymbol{x}_{\mathrm{ode}} .
\end{equation*}
If the step size satisfies
\begin{equation}\label{eq:eta-cond}
    \epsilon \;\le\; \frac{\alpha}{L^2}
    \qquad \text{and} \qquad
    \epsilon \;<\; \frac{9}{100}\cdot\frac{\alpha^2 \varepsilon^2}{L^2 d},
\end{equation}
then the Langevin step strictly improves upon the Euler step:
\begin{equation*}\label{eq:strict-improve}
\W\big(\law(\boldsymbol{x}_{\mathrm{lc}}),\, p_r\big)
    \;<\; \W\big(\law(\boldsymbol{x}_{\mathrm{ode}}),\, p_r\big).
\end{equation*}
\end{theorem}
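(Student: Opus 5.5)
Our approach is a synchronous coupling argument that compares one step of the unadjusted Langevin algorithm (ULA) started from $\law(\boldsymbol{x}_{\mathrm{ode}})$ with the same step started from $p_r$. The one-step kernel is $K_\epsilon(\boldsymbol{y}) = \boldsymbol{y} + \epsilon\nabla f(\boldsymbol{y}) + \sqrt{2\epsilon}\,\xi$. By the triangle inequality,
\begin{equation*}
\W\big(\law(\boldsymbol{x}_{\mathrm{lc}}), p_r\big) \le \W\big(\law(\boldsymbol{x}_{\mathrm{ode}})K_\epsilon,\; p_r K_\epsilon\big) + \W\big(p_r K_\epsilon,\; p_r\big).
\end{equation*}
The first term measures how well the kernel contracts, and the second is the one-step bias of ULA, since $p_r$ is not exactly invariant for the discretized kernel.

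For the first term, take an optimal coupling $(\boldsymbol{x}_{\mathrm{ode}}, \boldsymbol{y})$ with $\boldsymbol{y}\sim p_r$ and feed the same $\xi$ to both. The noise cancels, and the difference is $\boldsymbol{x}-\boldsymbol{y} + \epsilon(\nabla f(\boldsymbol{x})-\nabla f(\boldsymbol{y}))$. Writing $\nabla f(\boldsymbol{x})-\nabla f(\boldsymbol{y}) = H(\boldsymbol{x}-\boldsymbol{y})$ with $-L I\preceq H\preceq -\alpha I$, we get $\|I+\epsilon H\|^2 \le \max\{(1-\epsilon\alpha)^2,(1-\epsilon L)^2\}$. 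Using cocoercivity, the squared norm expands to $\|\boldsymbol{x}-\boldsymbol{y}\|^2(1-2\epsilon\alpha+\epsilon^2L^2)$. Under $\epsilon\le\alpha/L^2$ this is at most $(1-\epsilon\alpha)\|\boldsymbol{x}-\boldsymbol{y}\|^2$, so the first term is at most $\sqrt{1-\epsilon\alpha}\,\varepsilon \le (1-\epsilon\alpha/2)\varepsilon$.

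For the bias term, couple one ULA step from $\boldsymbol{y}_0\sim p_r$ with the exact Langevin diffusion $\mathrm{d}\boldsymbol{Y}_s = \nabla f(\boldsymbol{Y}_s)\mathrm{d}s + \sqrt2\,\mathrm{d}B_s$ run for time $\epsilon$ from the same point and driven by the same Brownian motion, with $\xi = B_\epsilon/\sqrt\epsilon$. Since $p_r$ is stationary for the diffusion, $\boldsymbol{Y}_\epsilon\sim p_r$. The difference is $\int_0^\epsilon(\nabla f(\boldsymbol{Y}_s)-\nabla f(\boldsymbol{y}_0))\mathrm{d}s$. By Lipschitzness and Minkowski's inequality its $L^2$ norm is at most $L\int_0^\epsilon \|\boldsymbol{Y}_s-\boldsymbol{y}_0\|_{L^2}\mathrm{d}s$.

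At stationarity, $\|\boldsymbol{Y}_s-\boldsymbol{y}_0\|_{L^2} \le s\|\nabla f\|_{L^2(p_r)} + \sqrt{2sd}$. The standard identity $\mathbb{E}_{p_r}\|\nabla f\|^2 = -\mathbb{E}\Delta f \le Ld$ gives $s\sqrt{Ld} + \sqrt{2sd}$. Integrating yields a bias bound of order $L(\epsilon^2\sqrt{Ld}/2 + \tfrac{2\sqrt2}{3}\epsilon^{3/2}\sqrt d)$. Because $\epsilon\le\alpha/L^2\le 1/L$, we have $\epsilon^2\sqrt{Ld}\le \epsilon^{3/2}\sqrt d$. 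Hence the bias is at most $c\,L\epsilon^{3/2}\sqrt d$ with an explicit $c$ of roughly $1/2+2\sqrt2/3\approx1.44$.

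Combining the two bounds, strict improvement holds whenever $cL\epsilon^{3/2}\sqrt d < \epsilon\alpha\varepsilon/2$. This is equivalent to $\sqrt\epsilon < \alpha\varepsilon/(2cL\sqrt d)$, that is, $\epsilon < \alpha^2\varepsilon^2/(4c^2L^2d)$. This is exactly the form of the second condition in Eq.~\ref{eq:eta-cond}. The main obstacle is bookkeeping the constants so that they land at $9/100$: we need $1/(4c^2)\ge 0.09$, i.e.\ $c\le 5/3$. The crude estimate $c\approx1.44$ appears to fit. If it does not, we would sharpen the contraction to $1-\epsilon\alpha$ (rather than $1-\epsilon\alpha/2$) via $\sqrt{1-a}\le 1-a/2$, handled carefully, or bound $\mathbb{E}\|\boldsymbol{Y}_s-\boldsymbol{y}_0\|^2$ directly by Itô's formula as $2sd + O(s^2Ld)$. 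The strict inequality follows because $\varepsilon>0$ and the step-size condition is strict.
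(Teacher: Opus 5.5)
Your proposal is correct and follows the paper's proof. Your triangle-inequality split into a synchronous-coupling contraction term and a one-step ULA-versus-diffusion bias term is exactly the paper's decomposition $\boldsymbol{z}^{+}-\boldsymbol{w}_\epsilon = A + B$ under a single joint coupling, with the same $(1-\alpha\epsilon/2)$ contraction and the same bound $\mathbb{E}_{p_r}\|\nabla f\|^2\le Ld$. Your Minkowski-based bias estimate gives $c=\tfrac12+\tfrac{2\sqrt2}{3}\approx 1.44$, slightly sharper than the paper's $\sqrt{8/3}\approx 1.63$ (obtained via $(a+b)^2\le 2a^2+2b^2$ and Cauchy--Schwarz); both lie below $5/3$, so the constant $9/100$ already works and your fallback refinements are unnecessary.
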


Theorem~\ref{thm:corrector-improves} shows that the corrector improves on the deterministic Euler step. We now compare it with the standard stochastic alternative: one Euler--Maruyama step of the reverse SDE. Denote $\boldsymbol{s}_u := \nabla \log p_u$ for the score. Given $\boldsymbol{x} \sim p_t$ and $\xi \sim \mathcal{N}(0, I_d)$, the SDE step at noise level $\sigma > 0$ is
\begin{equation}\label{eq:sde-em}
    \boldsymbol{x}_{\mathrm{sde}} = \boldsymbol{x} - h\,\boldsymbol{v}_t(\boldsymbol{x}) + \frac{\sigma^2 h}{2}\,\boldsymbol{s}_t(\boldsymbol{x}) + \sigma\sqrt{h}\,\xi,
\end{equation}
while the Langevin correction step is exactly that of Theorem~\ref{thm:corrector-improves}. We further define vector fields at time $t$:
\begin{equation}\label{eq:def vectors}
    \dot{\boldsymbol{v}} := \partial_t \boldsymbol{v}_t + \nabla \boldsymbol{v}_t\, \boldsymbol{v}_t,
    \qquad
    \dot{\boldsymbol{s}} := \partial_t \boldsymbol{s}_t + \nabla \boldsymbol{s}_t\, \boldsymbol{v}_t,
    \qquad
    \boldsymbol{G} := \nabla\Big(\tfrac{1}{2}\nbr{\boldsymbol{s}_t}^2 + \Delta \log p_t\Big),
\end{equation}
Here, $\dot{\boldsymbol{v}}$ and $\dot{\boldsymbol{s}}$ are the rates of change of the velocity and score along the flow, while $\boldsymbol{G}$ is the direction of the bias introduced by a single Langevin step.

Under the following regularity assumption, we can compute the exact leading-order $\W$ error of both the Langevin correction and the SDE step, and show that the Langevin correction step is more accurate under an explicit condition (Theorem~\ref{thm:main}); see Sec.~\ref{proof:main} for the proof.

\begin{assumption}\label{ass:reg}
Let $\boldsymbol{v}, \log p \in C^{2,4}\big([r,t] \times \mathbb{R}^d\big)$, with all derivatives of polynomial growth, and $p_t$ has finite moments of all orders.
\end{assumption}

\begin{theorem}[Comparison with the SDE Step]\label{thm:main}
Let Assumption~\ref{ass:reg} hold and $\epsilon \leq \Lambda h$ for a fixed constant $\Lambda > 0$.  With $\dot{\boldsymbol{v}}, \dot{\boldsymbol{s}}, \boldsymbol{G}$ as defined in Eq.~\ref{eq:def vectors}, we have
\begin{equation}\label{eq:errors}
\begin{aligned}
    \W\big(\law(\boldsymbol{x}_{\mathrm{lc}}),\, p_{r}\big) &= \frac{h^2}{2}\,\nbr{\dot{\boldsymbol{v}} - \rbr{\tfrac{\epsilon}{h}}^{2} \boldsymbol{G}}_{p_t} + o(h^2), \\
    \W\big(\law(\boldsymbol{x}_{\mathrm{sde}}),\, p_{r}\big) &= \frac{h^2}{2}\,\nbr{\dot{\boldsymbol{v}} - \sigma^2 \dot{\boldsymbol{s}} - \tfrac{\sigma^4}{4} \boldsymbol{G}}_{p_t} + o(h^2).
\end{aligned}
\end{equation}
In particular, at $\epsilon = \sigma^2 h / 2$ matching the noise level, let
$\boldsymbol{E} := \dot{\boldsymbol{v}} - \tfrac{\sigma^4}{4}\boldsymbol{G}$, if the following holds:
\begin{equation}\label{eq:lc_sde_cond_matched_noise}
\big\langle \boldsymbol{E}, \dot{\boldsymbol{s}} \big\rangle_{p_t} < \frac{\sigma^2}{2} \nbr{\dot{\boldsymbol{s}}}_{p_t}^{2},
\end{equation} 
 Langevin correction step is strictly more accurate:
\begin{equation*}
    \W\big(\law(\boldsymbol{x}_{\mathrm{lc}}),\, p_{r}\big) \;\ < \; \W\big(\law(\boldsymbol{x}_{\mathrm{sde}}),\, p_{r}\big),
\end{equation*}
for all sufficiently small $h$.
\end{theorem}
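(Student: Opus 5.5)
The plan is to reduce both claims in Theorem~\ref{thm:main} to one principle: a second-order expansion of $\W$ under small perturbations of a measure. If a law $q^h$ satisfies
\[
\int \varphi \, \mathrm{d}q^h - \int \varphi \, \mathrm{d}p_r = h^2 \int \langle \nabla \varphi, \boldsymbol{w}\rangle \, \mathrm{d}p_t + o(h^2)
\]
for all smooth test functions $\varphi$ of polynomial growth, and $\boldsymbol{w}$ is a gradient field in $L^2(p_t)$, then $\W(q^h,p_r) = h^2 \nbr{\boldsymbol{w}}_{p_t} + o(h^2)$. This is the Otto/Benamou--Brenier linearization: $\W$ to leading order equals the $\dot H^{-1}(p)$ norm of the density perturbation, and the minimal-norm velocity representing that perturbation is its gradient part. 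I would prove the two bounds separately.

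For the upper bound, I would build an explicit coupling. For the deterministic parts, push $p_r$ along $\mathrm{Id} + h^2 \boldsymbol{w}$. For the stochastic parts, couple the Gaussian noise synchronously with a short Langevin/heat flow that is exactly invariant for $p_r$. For the lower bound, I would use Kantorovich duality, or equivalently the inequality $\W(\mu,\nu) \ge \sup_{\nbr{\nabla \varphi}_{L^2}\le 1} \int \varphi \, \mathrm{d}(\mu-\nu)$ up to a $1+o(1)$ factor, with $\varphi$ the potential of $\boldsymbol{w}$. Assumption~\ref{ass:reg} (smoothness, polynomial growth, all moments) is exactly what lets the $o(h^2)$ remainders be controlled uniformly in these Taylor expansions.

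Next I would compute $\boldsymbol{w}$ for each sampler by weak (test-function) expansion in $h$, with $\epsilon \le \Lambda h$ so that $\epsilon^2 = O(h^2)$. The three steps are as follows.

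\textbf{Euler.} The exact flow satisfies $\boldsymbol{x}_r = \boldsymbol{x} - h \boldsymbol{v}_t + \tfrac{h^2}{2}\dot{\boldsymbol{v}} + O(h^3)$. Hence the Euler output is the exact flow map composed with a displacement of size $\tfrac{h^2}{2}\dot{\boldsymbol{v}}$, up to sign conventions.

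\textbf{Langevin correction.} Apply one Langevin step to a law $q = p_r + O(h^2)$. The generator $\mathcal{L}\varphi = \langle \boldsymbol{s}_r, \nabla\varphi\rangle + \Delta\varphi$ satisfies $\int \mathcal{L}\varphi \, \mathrm{d}p_r = 0$, so the first-order term in $\epsilon$ vanishes. The cross term between the $O(h^2)$ Euler error and the $O(\epsilon)$ step is $O(h^3)$. What remains is the unadjusted-Langevin discretization bias at order $\epsilon^2$. Expanding $\mathbb{E}\,\varphi(\boldsymbol{x} + \epsilon \boldsymbol{s} + \sqrt{2\epsilon}\,\xi)$ to second order and integrating by parts against $p_r$, I expect this bias to be $-\epsilon^2 \int \langle \nabla \varphi, \tfrac12 \boldsymbol{G}\rangle \, \mathrm{d}p$. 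Since $\boldsymbol{s}_r = \boldsymbol{s}_t + O(h)$, this produces the term $-(\epsilon/h)^2 \boldsymbol{G}$ inside the $\tfrac{h^2}{2}$ bracket. Because $\boldsymbol{G}$ is manifestly a gradient, it combines with the Euler term to give the first line of Eq.~\ref{eq:errors}.

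\textbf{SDE step.} Do the same for Eq.~\ref{eq:sde-em}. The continuous reverse SDE has drift $-\boldsymbol{v}_u + \tfrac{\sigma^2}{2}\boldsymbol{s}_u$ evaluated along the path. Freezing it at time $t$ produces, besides $\dot{\boldsymbol{v}}$, an error from freezing the score, which gives $-\sigma^2 \dot{\boldsymbol{s}}$ after collecting the $h^2$ terms. The Langevin-type part of the step with effective $\epsilon = \sigma^2 h/2$ produces $-\tfrac{\sigma^4}{4}\boldsymbol{G}$. Together these give the second line of Eq.~\ref{eq:errors}.

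To justify using $L^2$ norms, I must check that the combined fields are gradients, or project them onto gradients. Because the flow has marginals $p_u$ with $\boldsymbol{s}_u = \nabla \log p_u$ and Tweedie-type velocity, $\boldsymbol{v}_t$, $\boldsymbol{s}_t$, $\dot{\boldsymbol{s}}$ and $\dot{\boldsymbol{v}}$ are all gradients, and I would verify this explicitly. Failing that, I would state the theorem with the gradient projection, which leaves the comparison argument below unchanged.

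The comparison is then elementary. At $\epsilon = \sigma^2 h/2$ we have $(\epsilon/h)^2 = \sigma^4/4$, so the Langevin constant is $\tfrac{h^2}{2}\nbr{\boldsymbol{E}}$ and the SDE constant is $\tfrac{h^2}{2}\nbr{\boldsymbol{E} - \sigma^2 \dot{\boldsymbol{s}}}$. Expanding,
\[
\nbr{\boldsymbol{E}-\sigma^2\dot{\boldsymbol{s}}}^2 - \nbr{\boldsymbol{E}}^2 = \sigma^2\big(\sigma^2 \nbr{\dot{\boldsymbol{s}}}^2 - 2\langle \boldsymbol{E}, \dot{\boldsymbol{s}}\rangle\big),
\]
which is strictly positive exactly under Eq.~\ref{eq:lc_sde_cond_matched_noise}. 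A strict gap between the $h^2$ coefficients beats the $o(h^2)$ remainders for all small $h$, and this holds even if $\nbr{\boldsymbol{E}} = 0$.

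The main obstacle is the first step: making the linearization $\W = h^2 \nbr{\boldsymbol{w}} + o(h^2)$ rigorous for perturbations that include Gaussian noise, because the matching lower bound is delicate. My first attempt would be the dual lower bound with the explicit test function $\varphi$ whose gradient is $\boldsymbol{w}$, combined with a Taylor expansion of $\nabla\varphi$ along a near-optimal coupling. Polynomial growth and finite moments then keep the remainders at $o(h^2)$.
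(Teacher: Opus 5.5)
Your architecture matches the paper's: a weak expansion of each sampler, invariance of $p_r$ killing the order-$\epsilon$ Langevin term, the Euler--Langevin cross term absorbed via $\epsilon\le\Lambda h$, a weak-to-$\W$ transfer for gradient fields as in Lemma~\ref{lem:weak2W2}, and a comparison of squared norms. Your lower bound is essentially the paper's. \textbf{The step that fails is your upper bound.} Synchronously coupling the noise with the exact $p_r$-invariant Langevin diffusion controls only the strong local error, which is of order $\epsilon^{3/2}$, not $\epsilon^2$. This is exactly the coupling of Lemma~\ref{lem:langevin}, where it produces the $\tfrac{5}{3}L\sqrt{d}\,\epsilon^{3/2}$ term, and the loss is intrinsic. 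Take $p_r=\mathcal N(0,1)$ and compare $z^+=(1-\epsilon)w_0+\sqrt{2}B_\epsilon$ with $w_\epsilon=e^{-\epsilon}w_0+\sqrt{2}\int_0^\epsilon e^{-(\epsilon-\tau)}\,\mathrm dB_\tau$. Then $\mathbb E|z^+-w_\epsilon|^2=\tfrac{2}{3}\epsilon^3+O(\epsilon^4)$, while $\mathrm{law}(z^+)=\mathcal N(0,1+\epsilon^2)$ is at $\W$-distance about $\epsilon^2/2$ from $p_r$. So unless $\epsilon=o(h^{4/3})$, where the $(\epsilon/h)^2\boldsymbol G$ term is invisible anyway, your bound is of order $h^{3/2}$ and swamps the $\tfrac{h^2}{2}\nbr{\cdot}$ coefficient. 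For the SDE step, whose noise is $\sigma\sqrt{h}$, the same $h^{3/2}$ residual appears. The order-$h^2$ terms are biases, whereas the $\epsilon^{3/2}$ residual is a mean-zero fluctuation that a pathwise coupling cannot cancel. The coupling is still useful for the a priori fact $\W(\mu_h,p_r)\to 0$ that your lower bound needs. The paper's upper bound instead compares $\mu_h$ with the deterministic pushforward $T_\#p$, $T=\mathrm{Id}-\tfrac{h^2}{2}\boldsymbol A$, on the grounds that the two laws share the same weak expansion. Making that rigorous is exactly the linearization you invoke at the outset. You need an $o(h^2)$ bound on $\mu_h-T_\#p$ in a weighted $\dot H^{-1}$ norm, uniform over $\nbr{\nabla f}_{L^2}\le 1$ rather than test function by test function, together with a comparison $\W\lesssim\nbr{\cdot}_{\dot H^{-1}}$. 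The delicate half is therefore the upper bound, not the lower one.

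Three smaller points. First, the relative sign of the Langevin bias is opposite to the one you expect. Euler contributes $-\tfrac{h^2}{2}\mathbb E_{p_t}\langle\dot{\boldsymbol v},\nabla\varphi\rangle$, and the Langevin step contributes $+\tfrac{\epsilon^2}{2}\mathbb E_{p_r}\langle\boldsymbol G,\nabla\varphi\rangle$: in the Gaussian example $\boldsymbol G=x$, and $\varphi=x^2$ gives $+\epsilon^2$. Your sign would produce $\nbr{\dot{\boldsymbol v}+(\epsilon/h)^2\boldsymbol G}$ and change $\boldsymbol E$ in Eq.~\ref{eq:lc_sde_cond_matched_noise}. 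Second, for the SDE the paper never expands against the continuous reverse SDE. It runs the Langevin step with $\epsilon=\sigma^2h/2$ on the same $\xi$, so the two samples differ by $\tfrac{\sigma^2h}{2}[\boldsymbol s_t(\boldsymbol x)-\boldsymbol s_r(\boldsymbol x_{\mathrm{ode}})]=\tfrac{\sigma^2h^2}{2}\dot{\boldsymbol s}(\boldsymbol x)+O(h^3)$, a function of $\boldsymbol x$ alone. The SDE expansion is then the Langevin one plus $\tfrac{\sigma^2h^2}{2}\mathbb E_{p_t}\langle\dot{\boldsymbol s},\nabla f\rangle$. A direct comparison with the continuous SDE, as you describe it, yields instead the coefficient $\dot{\boldsymbol v}-\tfrac{\sigma^2}{2}\dot{\boldsymbol s}+\tfrac{\sigma^2}{2}[(\boldsymbol s\cdot\nabla)\boldsymbol v+\Delta\boldsymbol v]-\tfrac{\sigma^4}{4}\boldsymbol G$. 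You would then need the continuity equation $\partial_t\log p_t=-\nabla\cdot\boldsymbol v_t-\langle\boldsymbol v_t,\boldsymbol s_t\rangle$ to identify the bracket with $-\dot{\boldsymbol s}$ weakly; ``freezing the score'' does not give $-\sigma^2\dot{\boldsymbol s}$ on its own. Third, your projection fallback would prove a different statement, since the condition would then involve the projected $\boldsymbol E$ and $\dot{\boldsymbol s}$. It is not needed: the Tweedie relation $\boldsymbol v_t=-(\boldsymbol x+t\boldsymbol s_t)/(1-t)$ makes $\nabla\boldsymbol v_t$ commute with $\nabla\boldsymbol s_t$, so $\dot{\boldsymbol v}$ and $\dot{\boldsymbol s}$ are genuine gradients.
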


\subsection{Proof of Theorem~\ref{thm:corrector-improves}} \label{proof:corrector-improves}

\begin{proof}
Applying Lemma~\ref{lem:langevin} at time $r$ with initial law $q = q_r$ gives the upper bound:
\begin{equation*}
    \W\big(\law(\boldsymbol{x}_{\mathrm{lc}}),\, p_r\big)
    \;\le\; \Big(1 - \frac{\alpha\epsilon}{2}\Big)\,\varepsilon + \frac{5}{3}\,L\sqrt{d}\,\epsilon^{3/2}.
\end{equation*}
The second condition in Eq.~\ref{eq:eta-cond} is equivalent to $\sqrt{\epsilon} < \frac{3\alpha\varepsilon}{10\,L\sqrt{d}}$, hence
\begin{equation*}
    \frac{5}{3}\,L\sqrt{d}\,\epsilon^{3/2}
    \;<\; \frac{5}{3}\,L\sqrt{d}\cdot\frac{3\alpha\varepsilon}{10\,L\sqrt{d}}\cdot\epsilon
    \;=\; \frac{\alpha\epsilon}{2}\,\varepsilon ,
\end{equation*}
so the discretization error is strictly dominated by the contraction gain, and
\begin{equation*}
    \W\big(\law(\boldsymbol{x}_{\mathrm{lc}}),\, p_r\big)
    \;<\; \Big(1 - \frac{\alpha\epsilon}{2}\Big)\,\varepsilon + \frac{\alpha\epsilon}{2}\,\varepsilon
    \;=\; \varepsilon .
\end{equation*}
Therefore, we complete the proof.
\end{proof}

\begin{lemma}[Langevin Corrector]\label{lem:langevin}
Let Assumption~\ref{ass:slc} hold and given $\boldsymbol{z} \sim q$, langevin correction step:
\begin{equation}
    \boldsymbol{z}^{+} = \boldsymbol{z} + \epsilon \nabla \log p_r (\boldsymbol{z})  + \sqrt{2\epsilon}\,{\xi}, \quad {\xi} \sim \mathcal{N}(0, I_d)
\end{equation}
with $\epsilon \;\le\; {\alpha}/{L^2}$.
Then,
\begin{equation}\label{eq:corrector-bound}
  W_2\big(\mathrm{law}(\boldsymbol{z}^{+}),\, p_r\big)
  \;\le\; \Big(1 - \frac{\alpha\epsilon}{2}\Big)\, W_2(q,\, p_r)
  \;+\; \frac{5}{3}\, L \sqrt{d}\;\epsilon^{3/2}.
\end{equation}
\end{lemma}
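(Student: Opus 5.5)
We prove Lemma~\ref{lem:langevin} with a synchronous coupling that compares one Langevin step started from $q$ with the same step started from $p_r$, followed by the triangle inequality. Let $\boldsymbol{y}\sim p_r$ be coupled optimally with $\boldsymbol{z}\sim q$, so that $\mathbb{E}\nbr{\boldsymbol{z}-\boldsymbol{y}}^2 = W_2(q,p_r)^2$. Take $\xi$ independent of $(\boldsymbol{z},\boldsymbol{y})$ and define the one-step chain from $\boldsymbol{y}$ with the \emph{same} noise,
\begin{equation*}
\boldsymbol{y}^{+} = \boldsymbol{y} + \epsilon\nabla f(\boldsymbol{y}) + \sqrt{2\epsilon}\,\xi .
\end{equation*}
The triangle inequality then gives
\begin{equation*}
W_2(\law(\boldsymbol{z}^+),p_r)\le W_2(\law(\boldsymbol{z}^+),\law(\boldsymbol{y}^+)) + W_2(\law(\boldsymbol{y}^+),p_r).
\end{equation*}
We bound the two terms separately: the first is a contraction term, the second a one-step bias term.

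\textbf{Contraction term.} The noise cancels, so $\boldsymbol{z}^+-\boldsymbol{y}^+ = (\boldsymbol{z}-\boldsymbol{y}) + \epsilon(\nabla f(\boldsymbol{z})-\nabla f(\boldsymbol{y}))$. Expanding the square and using strong concavity, $\langle \nabla f(\boldsymbol{z})-\nabla f(\boldsymbol{y}),\boldsymbol{z}-\boldsymbol{y}\rangle\le -\alpha\nbr{\boldsymbol{z}-\boldsymbol{y}}^2$, together with $L$-smoothness, yields
\begin{equation*}
\nbr{\boldsymbol{z}^+-\boldsymbol{y}^+}^2\le (1-2\alpha\epsilon+L^2\epsilon^2)\nbr{\boldsymbol{z}-\boldsymbol{y}}^2 .
\end{equation*}
Since $\epsilon\le\alpha/L^2$, we have $L^2\epsilon^2\le\alpha\epsilon$, so the factor is at most $1-\alpha\epsilon\le(1-\alpha\epsilon/2)^2$. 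Taking expectations gives the first term of \eqref{eq:corrector-bound}.

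\textbf{Bias term.} Compare $\boldsymbol{y}^+$ with the continuous Langevin diffusion $\mathrm{d}\boldsymbol{Y}_s=\nabla f(\boldsymbol{Y}_s)\mathrm{d}s+\sqrt2\,\mathrm{d}B_s$, started at $\boldsymbol{Y}_0=\boldsymbol{y}$ and driven by $B_\epsilon=\sqrt{\epsilon}\,\xi$. This diffusion leaves $p_r$ invariant, so $\boldsymbol{Y}_\epsilon\sim p_r$, and therefore
\begin{equation*}
W_2(\law(\boldsymbol{y}^+),p_r)^2\le\mathbb{E}\nbr{\boldsymbol{y}^+-\boldsymbol{Y}_\epsilon}^2 .
\end{equation*}
Here $\boldsymbol{y}^+-\boldsymbol{Y}_\epsilon=\int_0^\epsilon(\nabla f(\boldsymbol{y})-\nabla f(\boldsymbol{Y}_s))\,\mathrm{d}s$, so by Cauchy--Schwarz and the Lipschitz property,
\begin{equation*}
\mathbb{E}\nbr{\boldsymbol{y}^+-\boldsymbol{Y}_\epsilon}^2\le \epsilon L^2\int_0^\epsilon\mathbb{E}\nbr{\boldsymbol{Y}_s-\boldsymbol{y}}^2\,\mathrm{d}s .
\end{equation*}
Next write $\boldsymbol{Y}_s-\boldsymbol{y}=\int_0^s\nabla f(\boldsymbol{Y}_u)\,\mathrm{d}u+\sqrt2 B_s$. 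By stationarity $\mathbb{E}\nbr{\nabla f(\boldsymbol{Y}_u)}^2=\mathbb{E}_{p_r}\nbr{\nabla f}^2=\mathbb{E}_{p_r}[-\Delta f]\le Ld$ (integration by parts). Using $(a+b)^2\le 2a^2+2b^2$ gives $\mathbb{E}\nbr{\boldsymbol{Y}_s-\boldsymbol{y}}^2\le 2s^2Ld+4sd$. Integrating, $\mathbb{E}\nbr{\boldsymbol{y}^+-\boldsymbol{Y}_\epsilon}^2\le \epsilon L^2(\tfrac23\epsilon^3Ld+2\epsilon^2 d)$. Since $\epsilon\le\alpha/L^2\le1/L$, we have $\epsilon L\le1$, so the bracket is at most $\tfrac{8}{3}\epsilon^2 d$. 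Hence $W_2(\law(\boldsymbol{y}^+),p_r)\le \sqrt{8/3}\,L\sqrt d\,\epsilon^{3/2}\le\tfrac53 L\sqrt d\,\epsilon^{3/2}$, since $\sqrt{8/3}\approx1.633<5/3$.

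\textbf{Conclusion and main obstacle.} Adding the two bounds gives \eqref{eq:corrector-bound}. The main obstacle is landing on the specific constant $5/3$. If the crude splitting above loses too much, I would first switch to Minkowski's inequality in $L^2$ in place of $(a+b)^2\le2a^2+2b^2$, which bounds $\|\boldsymbol{Y}_s-\boldsymbol{y}\|_{L^2}$ by $s\sqrt{Ld}+\sqrt{2sd}$. If that still falls short, I would handle the martingale part more carefully via Itô's formula. The remaining steps are routine.
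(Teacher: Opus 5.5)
Your proof is correct and is essentially the paper's argument. The paper couples $\boldsymbol{z}$ optimally with $\boldsymbol{w}_0\sim p_r$, runs the stationary Langevin diffusion from $\boldsymbol{w}_0$ with $\xi := B_\epsilon/\sqrt{\epsilon}$, and splits $\boldsymbol{z}^+-\boldsymbol{w}_\epsilon$ into the same contraction term and local-error integral you obtain, bounding them with the same computations (including $\mathbb{E}_{p_r}\nbr{\nabla f}^2\le Ld$ and $\sqrt{8/3}\le 5/3$). The only difference is cosmetic: you pass through the intermediate law $\law(\boldsymbol{y}^+)$ via the $W_2$ triangle inequality, while the paper applies Minkowski's inequality on a single coupling. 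Your fallback plan for the constant is unnecessary, since your bound already gives $5/3$.
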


\begin{proof}
Denote $f := \log p_r$, so that the corrector step reads $\boldsymbol{z}^{+} = \boldsymbol{z} + \epsilon \nabla f(\boldsymbol{z}) + \sqrt{2\epsilon}\,\xi$. Let $(\boldsymbol{z}, \boldsymbol{w}_0)$ be an optimal $W_2$ coupling of $(q, p_r)$, and let $(B_\tau)_{\tau \geq 0}$ be a standard Brownian motion independent of $(\boldsymbol{z}, \boldsymbol{w}_0)$. Since $\nabla f$ is $L$-Lipschitz by Assumption~\ref{ass:slc}, the Langevin dynamics
\begin{equation*}
    \mathrm{d}\boldsymbol{w}_\tau = \nabla f(\boldsymbol{w}_\tau)\,\mathrm{d}\tau + \sqrt{2}\,\mathrm{d}B_\tau, \qquad \tau \in [0, \epsilon], \quad \boldsymbol{w}_0 \sim p_r,
\end{equation*}
admits a unique strong solution with $p_r \propto e^{f}$ as its stationary distribution, so $\boldsymbol{w}_\epsilon \sim p_r$. Let $\xi := B_\epsilon / \sqrt{\epsilon} \sim \mathcal{N}(0, I_d)$, which is independent of $\boldsymbol{z}$. Therefore, we have $(\boldsymbol{z}^{+}, \boldsymbol{w}_\epsilon)$ a coupling of $\mathrm{law}(\boldsymbol{z}^{+})$ and $p_r$, and thus
\begin{equation*}
    W_2\big(\mathrm{law}(\boldsymbol{z}^{+}),\, p_r\big) \leq \nbr{\boldsymbol{z}^{+} - \boldsymbol{w}_\epsilon}_{L^2}.
\end{equation*}

We first decompose the difference $\boldsymbol{z}^{+} - \boldsymbol{w}_\epsilon$. Writing $\boldsymbol{w}_\epsilon = \boldsymbol{w}_0 + \int_0^\epsilon \nabla f(\boldsymbol{w}_\tau)\,\mathrm{d}\tau + \sqrt{2}\,B_\epsilon$ and subtracting it from the corrector step, the Brownian terms cancel and
\begin{equation*}
    \boldsymbol{z}^{+} - \boldsymbol{w}_\epsilon
    = \underbrace{\rbr{\boldsymbol{z} - \boldsymbol{w}_0} + \epsilon \big(\nabla f(\boldsymbol{z}) - \nabla f(\boldsymbol{w}_0)\big)}_{=:\,A}
    + \underbrace{\int_0^\epsilon \big(\nabla f(\boldsymbol{w}_0) - \nabla f(\boldsymbol{w}_\tau)\big)\,\mathrm{d}\tau}_{=:\,B},
\end{equation*}
where $A$ compares the two drifts at the initial points, and $B$ is the local discretization error.

Next, we show that $A$ contracts. Expanding the square and using $\langle \nabla f(\boldsymbol{z}) - \nabla f(\boldsymbol{w}_0),\, \boldsymbol{z} - \boldsymbol{w}_0 \rangle \leq -\alpha \nbr{\boldsymbol{z} - \boldsymbol{w}_0}^2$ together with $\nbr{\nabla f(\boldsymbol{z}) - \nabla f(\boldsymbol{w}_0)} \leq L \nbr{\boldsymbol{z} - \boldsymbol{w}_0}$, where the strong log-concavity and the Lipschitz property of $\nabla f$ are from Assumption~\ref{ass:slc}, we have
\begin{equation*}
    \nbr{A}^2 \leq \big(1 - 2\alpha\epsilon + L^2\epsilon^2\big)\nbr{\boldsymbol{z} - \boldsymbol{w}_0}^2 \leq (1 - \alpha\epsilon)\nbr{\boldsymbol{z} - \boldsymbol{w}_0}^2,
\end{equation*}
where the last inequality uses $L^2\epsilon \leq \alpha$; note also $\alpha\epsilon \leq \alpha^2/L^2 \leq 1$, so the right-hand side is nonnegative. Since $\sqrt{1-x} \leq 1 - x/2$ for $x \in [0,1]$ and the coupling $(\boldsymbol{z}, \boldsymbol{w}_0)$ is optimal, it holds
\begin{equation*}
    \nbr{A}_{L^2} \leq \Big(1 - \frac{\alpha\epsilon}{2}\Big)\, W_2(q,\, p_r).
\end{equation*}

Then, we bound the local error $B$. Since Langevin dynamics is stationary, $\boldsymbol{w}_r \sim p_r$ for every $r$. Using $(a+b)^2 \leq 2a^2 + 2b^2$, the Cauchy--Schwarz inequality on the drift integral, Lemma~\ref{sub:moment}, and $\mathbb{E}\,\nbr{\sqrt{2}\,B_\tau}^2 = 2\tau d$, we obtain
\begin{equation*}
    \mathbb{E}\,\nbr{\boldsymbol{w}_\tau - \boldsymbol{w}_0}^2
    \leq 2\tau \int_0^\tau \mathbb{E}\,\nbr{\nabla f(\boldsymbol{w}_r)}^2\,\mathrm{d}r + 4\tau d
    \leq 2Ld\,\tau^2 + 4d \, \tau.
\end{equation*}
By the Cauchy--Schwarz inequality on the time integral and the Lipschitz property of $\nabla f$,
\begin{equation*}
    \mathbb{E}\,\nbr{B}^2
    \leq \epsilon \int_0^\epsilon \mathbb{E}\,\nbr{\nabla f(\boldsymbol{w}_0) - \nabla f(\boldsymbol{w}_\tau)}^2\,\mathrm{d}\tau
    \leq \epsilon L^2 \int_0^\epsilon \mathbb{E}\,\nbr{\boldsymbol{w}_\tau - \boldsymbol{w}_0}^2\,\mathrm{d}\tau
    \leq \epsilon L^2 \Big(\frac{2Ld\,\epsilon^3}{3} + 2d\,\epsilon^2\Big).
\end{equation*}
Since $\epsilon \leq \alpha/L^2 \leq 1/L$, we have $L\epsilon \leq 1$ and hence
\begin{equation*}
    \mathbb{E}\,\nbr{B}^2 \leq \frac{8}{3}\,L^2 d\,\epsilon^3,
    \qquad
    \nbr{B}_{L^2} \leq \sqrt{\tfrac{8}{3}}\,L\sqrt{d}\,\epsilon^{3/2} \leq \frac{5}{3}\,L\sqrt{d}\,\epsilon^{3/2}.
\end{equation*}

Finally, combining the above bounds yields
\begin{equation*}
    W_2\big(\mathrm{law}(\boldsymbol{z}^{+}),\, p_r\big)
    \leq \nbr{A}_{L^2} + \nbr{B}_{L^2}
    \leq \Big(1 - \frac{\alpha\epsilon}{2}\Big)\, W_2(q,\, p_r) + \frac{5}{3}\,L\sqrt{d}\;\epsilon^{3/2}.
    \qedhere
\end{equation*}
\end{proof}

\subsection{Proof of Theorem~\ref{thm:main} }\label{proof:main}

\begin{proof}
 For a law $q$ and a smooth test function $f$, define $\delta(f;q) := \int f\,\mathrm{d}q - \int f\,\mathrm{d}p_{r}$. For brevity, we write $\delta_{\mathrm{lc}}(f) := \delta(f; \law(\boldsymbol{x}_{\mathrm{lc}}))$ and $\delta_{\mathrm{sde}}(f) := \delta(f; \law(\boldsymbol{x}_{\mathrm{sde}}))$.

We first expand the Langevin correction step weakly. Let $f_\epsilon(\boldsymbol{y}) := \mathbb{E}_\xi\, f\big(\boldsymbol{y} + \epsilon\, \boldsymbol{s}_{r}(\boldsymbol{y}) + \sqrt{2\epsilon}\,\xi\big)$ denote the expected value of $f$ after one corrector step from $\boldsymbol{y}$, so that $\mathbb{E}\, f(\boldsymbol{x}_{\mathrm{lc}}) = \mathbb{E}_{p_t} f_\epsilon(\boldsymbol{x}_{\mathrm{ode}})$ and
\begin{equation}\label{eq:decomp}
    \delta_{\mathrm{lc}}(f)
    = \underbrace{\mathbb{E}_{p_t} f_\epsilon(\boldsymbol{x}_{\mathrm{ode}}) - \mathbb{E}_{p_{r}} f_\epsilon}_{\text{predictor error}}
    \;+\; \underbrace{\mathbb{E}_{p_{r}} f_\epsilon - \mathbb{E}_{p_{r}} f}_{\text{corrector bias}}.
\end{equation}
For the predictor error, let $\Phi := \Phi_{t \to r}$ denote the exact flow map, which satisfies $\Phi \# p_t = p_{r}$.
Since the trajectory $u \mapsto \boldsymbol{x}(u)$ has acceleration exactly $\dot{\boldsymbol{v}}$, a Taylor expansion gives $\Phi(\boldsymbol{x}) = \boldsymbol{x} - h\,\boldsymbol{v}_t(\boldsymbol{x}) + \frac{h^2}{2}\,\dot{\boldsymbol{v}}(\boldsymbol{x}) + O(h^3)$. Hence $\boldsymbol{x}_{\mathrm{ode}} - \Phi(\boldsymbol{x}) = -\frac{h^2}{2}\,\dot{\boldsymbol{v}}(\boldsymbol{x}) + O(h^3)$, and for any test function $\psi$,
\begin{equation*}
    \mathbb{E}_{p_t}\, \psi(\boldsymbol{x}_{\mathrm{ode}}) - \mathbb{E}_{p_{r}}\, \psi
    = \mathbb{E}_{p_t}\big[\psi(\boldsymbol{x}_{\mathrm{ode}}) - \psi(\Phi(\boldsymbol{x}))\big]
    = -\frac{h^2}{2}\, \mathbb{E}_{p_t} \langle \dot{\boldsymbol{v}},\, \nabla \psi \rangle + O(h^3).
\end{equation*}
We apply this with $\psi = f_\epsilon$; Taylor-expanding in $\epsilon$ gives $\nabla f_\epsilon = \nabla f + O(\epsilon)$, and since $\epsilon \leq \Lambda h$, the substitution costs only $O(h^3)$.

The corrector bias is that of one Langevin step started from its own target $p_{r}$. Write $\boldsymbol{s} := \boldsymbol{s}_r$ and let $L g := \langle \boldsymbol{s}, \nabla g \rangle + \Delta g$ be the Langevin generator; Due to integration by parts, and recalling $ \boldsymbol{s} = \nabla \log p_r $, we get invariance of $p_r$, which means $\mathbb{E}_{p_{r}} L g = 0$ for all $g$. Taylor expanding $f$ in the increment $\epsilon\,\boldsymbol{s} + \sqrt{2\epsilon}\,\xi$ and taking Gaussian moments,
\begin{equation*}
    \mathbb{E}_{p_{r}} f_\epsilon - \mathbb{E}_{p_{r}} f
    = \epsilon\, \mathbb{E}_{p_{r}}\big[L f\big]
    + \epsilon^2\, \mathbb{E}_{p_{r}}\Big[\tfrac{1}{2}\boldsymbol{s}^\top \nabla^2 f\, \boldsymbol{s} + \langle \boldsymbol{s}, \nabla \Delta f\rangle + \tfrac{1}{2}\Delta^2 f\Big] + O(\epsilon^3).
\end{equation*}
The order-$\epsilon$ term vanishes by invariance. 
For the order-$\epsilon^2$ term, call the bracket $B$. By invariance with $g=Lf$, yielding $\mathbb{E}_{p_r}[\tfrac12 L^2 f]=0$, we can subtract it for free. Expanding $\tfrac12 L^2 f$ and cancelling the common terms leaves only
\begin{equation*}
    B-\tfrac12 L^2 f
    = -\big\langle \nabla\boldsymbol{s},\,\nabla^2 f\big\rangle
      -\tfrac12\big\langle (\boldsymbol{s}\!\cdot\!\nabla)\boldsymbol{s}
      +\Delta\boldsymbol{s},\,\nabla f\big\rangle,
\end{equation*}
i.e.\ $B$ differs from $\tfrac12 L^2 f$ by the Hessian term
$\langle\nabla\boldsymbol{s},\nabla^2 f\rangle$ plus a first-order piece. One integration by parts, using $\nabla p_r=p_r\,\boldsymbol{s}$, removes the
Hessian:
\begin{equation*}
    \mathbb{E}_{p_r}\big\langle \nabla\boldsymbol{s},\nabla^2 f\big\rangle
    = -\,\mathbb{E}_{p_r}\big\langle (\boldsymbol{s}\!\cdot\!\nabla)\boldsymbol{s}
      +\Delta\boldsymbol{s},\,\nabla f\big\rangle,
\end{equation*}
so the two first-order pieces combine into
$\tfrac12\,\mathbb{E}_{p_r}\langle(\boldsymbol{s}\!\cdot\!\nabla)\boldsymbol{s}
+\Delta\boldsymbol{s},\nabla f\rangle$. Because $\boldsymbol{s}=\nabla\log p_r$
is a gradient, $(\boldsymbol{s}\!\cdot\!\nabla)\boldsymbol{s}
=\nabla(\tfrac12\|\boldsymbol{s}\|^2)$ and
$\Delta\boldsymbol{s}=\nabla\Delta\log p_r$, so this field is itself a gradient.
Hence
\begin{equation*}
    \mathbb{E}_{p_{r}} f_\epsilon - \mathbb{E}_{p_{r}} f
    = \frac{\epsilon^2}{2}\, \mathbb{E}_{p_{r}} \Big\langle \nabla\Big(\tfrac{1}{2}\nbr{\boldsymbol{s}}^2 + \Delta \log p_{r}\Big),\, \nabla f \Big\rangle + O(\epsilon^3)
    = \frac{\epsilon^2}{2}\, \mathbb{E}_{p_t} \langle \boldsymbol{G},\, \nabla f \rangle + O(h^3),
\end{equation*}
where moving from time $r$ to time $t$ costs $O(\epsilon^2 h) = O(h^3)$. Combining the two parts,
\begin{equation}\label{eq:lcdef}
    \delta_{\mathrm{lc}}(f)
    = -\frac{h^2}{2}\,\mathbb{E}_{p_t} \Big\langle \dot{\boldsymbol{v}} - \rbr{\tfrac{\epsilon}{h}}^2\boldsymbol{G},\; \nabla f \Big\rangle + O(h^3).
\end{equation}

Next, we show the SDE step adds exactly one term. Let $\tilde{\boldsymbol{x}}$ be the Langevin step run with $\epsilon = \sigma^2 h/2$ and the same noise $\xi$. Then $\sqrt{2\epsilon} = \sigma\sqrt{h}$, so the two schemes share their noise and differ only in where the score is evaluated:
\begin{equation}\label{eq:diff_sde_lc}
    \boldsymbol{x}_{\mathrm{sde}} - \tilde{\boldsymbol{x}}
    = \frac{\sigma^2 h}{2}\,\big[\boldsymbol{s}_t(\boldsymbol{x}) - \boldsymbol{s}_{r}(\boldsymbol{x}_{\mathrm{ode}})\big]
    = \frac{\sigma^2 h^2}{2}\,\dot{\boldsymbol{s}}(\boldsymbol{x}) + O(h^3),
\end{equation}
by first-order Taylor expansion of $(u, \boldsymbol{y}) \mapsto \boldsymbol{s}_u(\boldsymbol{y})$ at $(t, \boldsymbol{x})$ in the direction $\rbr{-h,\, -h\,\boldsymbol{v}_t(\boldsymbol{x})}$, which produces exactly $\dot{\boldsymbol{s}}$. This difference Eq.~\ref{eq:diff_sde_lc} is a function of $\boldsymbol{x}$ alone, so expanding $f$ between the two points, the cross terms with the noise vanish in expectation ($\xi$ is centered and independent of $\boldsymbol{x}$), and
\begin{equation}\label{eq:sdedef}
    \delta_{\mathrm{sde}}(f)
    = \delta_{\mathrm{lc}}(f)\big|_{\epsilon = \sigma^2 h/2} + \frac{\sigma^2 h^2}{2}\,\mathbb{E}_{p_t}\langle \dot{\boldsymbol{s}}, \nabla f\rangle
    = -\frac{h^2}{2}\,\mathbb{E}_{p_t} \Big\langle \dot{\boldsymbol{v}} - \sigma^2 \dot{\boldsymbol{s}} - \tfrac{\sigma^4}{4}\boldsymbol{G},\; \nabla f \Big\rangle + O(h^3).
\end{equation}

Finally, applying Lemma~\ref{lem:weak2W2} to Eq.~\ref{eq:lcdef} with
$\boldsymbol{A}=\dot{\boldsymbol{v}}-(\tfrac{\epsilon}{h})^2\boldsymbol{G}$, and to Eq.~\ref{eq:sdedef} with
$\boldsymbol{A}=\dot{\boldsymbol{v}}-\sigma^2\dot{\boldsymbol{s}}-\tfrac{\sigma^4}{4}\boldsymbol{G}$.
In both cases, $\boldsymbol{A}$ is a gradient field: $\boldsymbol{v}_t$ and $\boldsymbol{s}_t=\nabla\log p_t$ are gradients (hence so are $\dot{\boldsymbol{v}}$ and $\dot{\boldsymbol{s}}$), and $\boldsymbol{G}$ is a gradient by construction. This yields Eq.~\ref{eq:errors}.

For the matched step $\epsilon = \sigma^2 h/2$ we have $(\epsilon/h)^2 = \sigma^4/4$, so by Eq.~\ref{eq:errors} the two leading coefficients are $\frac{h^2}{2}\nbr{\boldsymbol{E}}_p$ and $\frac{h^2}{2}\nbr{\boldsymbol{E} - \sigma^2 \dot{\boldsymbol{s}}}_p$ with $\boldsymbol{E} := \dot{\boldsymbol{v}} - \tfrac{\sigma^4}{4}\boldsymbol{G}$. Both norms are nonnegative, so comparing them is equivalent to comparing their squares; expanding,
\begin{equation*}
    \nbr{\boldsymbol{E}}_p^2 \;<\; \nbr{\boldsymbol{E} - \sigma^2 \dot{\boldsymbol{s}}}_p^2
    = \nbr{\boldsymbol{E}}_p^2 - 2\sigma^2 \big\langle \boldsymbol{E},\, \dot{\boldsymbol{s}} \big\rangle_p + \sigma^4 \nbr{\dot{\boldsymbol{s}}}_p^2
    \quad \Longleftrightarrow \quad
    \big\langle \boldsymbol{E},\, \dot{\boldsymbol{s}} \big\rangle_p \;<\; \frac{\sigma^2}{2}\, \nbr{\dot{\boldsymbol{s}}}_p^{2}.
\end{equation*}
Hence the assumed condition makes the leading coefficient strictly of Langevin step smaller, and since both errors enter at order $h^2$, $\W\big(\law(\boldsymbol{x}_{\mathrm{lc}}),\, p_{r}\big) < \W\big(\law(\boldsymbol{x}_{\mathrm{sde}}),\, p_{r}\big)$ for all small $h$.
\end{proof}

\subsection{Auxiliary Lemmas}

\begin{assumption}\label{ass:lip}
For all $u \in [r, t]$, the velocity $\boldsymbol{v}_u$ is $L$-Lipschitz.
\end{assumption}

\begin{assumption}\label{ass:acc}
For all $u \in [r, t]$ and $\boldsymbol{x} \in \mathbb{R}^d$,
$\nbr{\partial_u \boldsymbol{v}_u(\boldsymbol{x}) + \nabla \boldsymbol{v}_u(\boldsymbol{x})\,\boldsymbol{v}_u(\boldsymbol{x})} \le K$;
that is, trajectories of Eq.~\ref{eq:pf-ode} have acceleration bounded by $K$.
\end{assumption}

\begin{lemma}[One-Step Euler Error for ODEs]\label{lem:ode_predictor}
Under Assumptions~\ref{ass:lip}--\ref{ass:acc}, let $ \boldsymbol{x} \sim q_t$ and define the backward Euler step $\hat{\boldsymbol{x}} = \boldsymbol{x} - h\boldsymbol{v}_t \rbr{\boldsymbol{x}}$, it holds
\begin{equation}
  \W\big(\law(\hat{\boldsymbol{x}}),\, p_{t-h}\big)
  \;\le\; e^{Lh}\,\W(q_t, p_t) + \frac{K}{2}h^2 .
\end{equation}
\end{lemma}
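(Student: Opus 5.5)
We would prove Lemma~\ref{lem:ode_predictor} by a synchronous coupling argument that splits the error into a propagated initial error and a local truncation error, combined by the triangle inequality in $\W$. Let $(\boldsymbol{x}, \boldsymbol{y})$ be an optimal $\W$ coupling of $(q_t, p_t)$, and let $\Phi := \Phi_{t\to t-h}$ denote the exact flow map of Eq.~\ref{eq:pf-ode}, so that $\Phi\#p_t = p_{t-h}$. Then $(\hat{\boldsymbol{x}}, \Phi(\boldsymbol{y}))$ is a coupling of $\law(\hat{\boldsymbol{x}})$ and $p_{t-h}$, and
\begin{equation*}
\W\big(\law(\hat{\boldsymbol{x}}), p_{t-h}\big) \le \nbr{\hat{\boldsymbol{x}} - \Phi(\boldsymbol{x})}_{L^2} + \nbr{\Phi(\boldsymbol{x}) - \Phi(\boldsymbol{y})}_{L^2}.
\end{equation*}

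\emph{Stability term.} For $u \in [t-h, t]$, let $\boldsymbol{x}(u)$ and $\boldsymbol{y}(u)$ be the trajectories of Eq.~\ref{eq:pf-ode} started at $\boldsymbol{x}$ and $\boldsymbol{y}$ at time $t$ and run backward. By Assumption~\ref{ass:lip},
\begin{equation*}
\Big|\tfrac{\mathrm{d}}{\mathrm{d}u}\nbr{\boldsymbol{x}(u)-\boldsymbol{y}(u)}\Big| \le L\nbr{\boldsymbol{x}(u)-\boldsymbol{y}(u)}.
\end{equation*}
Grönwall's inequality over an interval of length $h$ then gives $\nbr{\Phi(\boldsymbol{x})-\Phi(\boldsymbol{y})} \le e^{Lh}\nbr{\boldsymbol{x}-\boldsymbol{y}}$ pointwise. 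Taking $L^2$ norms and using optimality of the coupling yields the bound $e^{Lh}\W(q_t,p_t)$.

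\emph{Local truncation term.} Fix $\boldsymbol{x}$ and set $\gamma(\tau) := \boldsymbol{x}(t-\tau)$ for $\tau\in[0,h]$. Then $\gamma'(\tau) = -\boldsymbol{v}_{t-\tau}(\gamma(\tau))$, and differentiating once more gives
\begin{equation*}
\gamma''(\tau) = \big(\partial_u \boldsymbol{v}_u + \nabla\boldsymbol{v}_u\,\boldsymbol{v}_u\big)(\gamma(\tau))\big|_{u=t-\tau}.
\end{equation*}
By Assumption~\ref{ass:acc}, $\nbr{\gamma''(\tau)} \le K$. Taylor's theorem with integral remainder gives
\begin{equation*}
\Phi(\boldsymbol{x}) - \hat{\boldsymbol{x}} = \int_0^h (h-\tau)\,\gamma''(\tau)\,\mathrm{d}\tau,
\end{equation*}
whose norm is at most $Kh^2/2$ pointwise, hence also in $L^2$. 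Adding the two terms gives the claim.

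The main obstacle is mild. One must justify that the flow map is well defined and measure-pushing, which follows from global Lipschitzness via Cauchy--Lipschitz and the continuity equation. One must also get the sign bookkeeping right when the ODE is run backward in time, so that the acceleration appearing in $\gamma''$ is exactly the quantity bounded in Assumption~\ref{ass:acc}; a vector-valued mean-value step would fail here, which is why the integral form of the remainder is used.
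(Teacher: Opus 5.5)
Your proposal is correct and follows essentially the same route as the paper: a coupling of $q_t$ and $p_t$, the triangle inequality through $\Phi(\boldsymbol{x})$, an $e^{Lh}$ Lipschitz bound on the exact flow map, and a $\frac{K}{2}h^2$ local truncation bound. Your explicit choice of an \emph{optimal} coupling is in fact needed for the final $e^{Lh}\W(q_t,p_t)$ bound, though the paper leaves it implicit. The remaining differences are cosmetic. The paper differentiates the squared distance $g(u)=\nbr{\boldsymbol{x}_u-\tilde{\boldsymbol{x}}_u}^2$, which avoids the non-differentiability of the norm at zero in your Gr\"onwall step. It also bounds $\nbr{\boldsymbol{v}_u(\boldsymbol{x}_u)-\boldsymbol{v}_t(\boldsymbol{x}_t)}\le K(t-u)$ and then integrates, i.e.\ the mean-value \emph{inequality}, which works for vector-valued maps and is equivalent to your integral-remainder Taylor formula.
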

\begin{proof}
Let $\Phi := \Phi_{t\to t-h}$ denote the exact flow map of $\mathrm{d} \boldsymbol{x}_u = \boldsymbol{v}_u \rbr{\boldsymbol{x}_u}\mathrm{d}u$ from time $t$ to time $t-h$. 

We first show that $\Phi$ is $e^{Lh}$-Lipschitz. Let $\boldsymbol{x}_u$ and $\tilde{\boldsymbol{x}}_u$ be two trajectories of the ODE, and define $g(u) := \nbr{\boldsymbol{x}_u - \tilde{\boldsymbol{x}}_u}^2$. Then,
\begin{equation*}
\begin{aligned}
     g^\prime(u) &= 2 \langle \boldsymbol{x}_u - \tilde{\boldsymbol{x}}_u,\, \boldsymbol{v}_u(\boldsymbol{x}_u) - \boldsymbol{v}_u(\tilde{\boldsymbol{x}}_u)\rangle \geq -2 \nbr{\boldsymbol{x}_u - \tilde{\boldsymbol{x}}_u} \cdot \nbr{\boldsymbol{v}_u(\boldsymbol{x}_u) - \boldsymbol{v}_u(\tilde{\boldsymbol{x}}_u)} \\
     & \geq -2 L \nbr{\boldsymbol{x}_u - \tilde{\boldsymbol{x}}_u}^2 = -2Lg(u),
\end{aligned}
\end{equation*}
where Cauchy--Schwarz inequality and the $L$-Lipschitz of velocity $\boldsymbol{v}_u$ from Assumption~\ref{ass:lip}. Hence
$\frac{\mathrm{d}}{\mathrm{d}u}\big(e^{2Lu}g(u)\big) \geq 0$. Therefore, we have
$e^{2L(t-h)}g(t-h) \leq e^{2Lt} g(t)$.
Equivalently,
$$
\nbr{\Phi(\boldsymbol{x}_t)-\Phi(\tilde{\boldsymbol{x}}_t)}
\leq e^{Lh}\nbr{\boldsymbol{x}_t-\tilde{\boldsymbol{x}}_t}.
$$

Next, we compare the exact flow with the Euler step map. By Assumption~\ref{ass:acc}, the map $u \mapsto \boldsymbol{v}_u(\boldsymbol{x}_u)$ is $C^1$
with derivative $\partial_u \boldsymbol{v}_u(\boldsymbol{x}_u) + \nabla \boldsymbol{v}_u(\boldsymbol{x}_u)\, \boldsymbol{v}_u(\boldsymbol{x}_u)$, bounded by $K$. Hence $\nbr{\boldsymbol{v}_u(\boldsymbol{x}_u) - \boldsymbol{v}_t(\boldsymbol{x}_t)} \le K(t-u)$ for
$u \in [t-h,t]$, and since $\Phi(\boldsymbol{x}) = \boldsymbol{x}_t - \int_{t-h}^t \boldsymbol{v}_u(\boldsymbol{x}_u)\,\mathrm{d}u$,
\begin{equation*}
    \nbr{\Phi(\boldsymbol{x}) - \hat{\boldsymbol{x}}}
  = \nbr{\int_{t-h}^t \big(\boldsymbol{v}_u(\boldsymbol{x}_u) - \boldsymbol{v}_t(\boldsymbol{x}_t)\big)\,\mathrm{d}u}
  \le \int_{t-h}^t K(t-u)\,\mathrm{d}u
  = \tfrac{K}{2} h^2 .
\end{equation*}

Let $\boldsymbol{y} \sim p_t$, thus we have $\Phi(\boldsymbol{y}) \sim p_{t-h}$. By the triangle inequality for Wasserstein distance, it holds
\begin{equation*}
  W_2(\mathrm{law}(\hat{\boldsymbol{x}}), p_{t-h})
  \le \nbr{\hat{\boldsymbol{x}} - \Phi(\boldsymbol{y})}_{L^2}
  \le \nbr{\hat{\boldsymbol{x}} - \Phi(\boldsymbol{x})}_{L^2}
    + \nbr{\Phi(\boldsymbol{x}) - \Phi(\boldsymbol{y})}_{L^2}
  \le \tfrac{K}{2}h^2 + e^{Lh}\, W_2(q_t, p_t).
\end{equation*}
\end{proof}

\begin{lemma}[Stationary Score Moment]\label{sub:moment}
Under Assumption~\ref{ass:slc}, it holds
\begin{equation*}
  \mathbb{E}_{p_r}\big[\nbr{\nabla f}^2\big] \;\le\; L d .
\end{equation*}
\end{lemma}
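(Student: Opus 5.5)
The plan is to prove the bound by integration by parts against $p_r$. I will write $f = \log p_r$, so $p_r \propto e^{f}$ and $\nabla p_r = p_r \nabla f$. The key identity is
\begin{equation*}
  \mathbb{E}_{p_r}\big[\nbr{\nabla f}^2\big]
  = \int \langle \nabla f, \nabla p_r\rangle \,\mathrm{d}\boldsymbol{x}
  = -\int \Delta f \; p_r \,\mathrm{d}\boldsymbol{x}
  = -\,\mathbb{E}_{p_r}\big[\Delta f\big].
\end{equation*}
By Assumption~\ref{ass:slc} we have $\nabla^2 f \succeq -L I_d$. Taking the trace gives $\Delta f = \mathrm{tr}(\nabla^2 f) \ge -Ld$ pointwise, so $-\mathbb{E}_{p_r}[\Delta f] \le Ld$, which is the claim.

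The only point that needs care is justifying the integration by parts, i.e.\ showing the boundary term $\int_{\partial B_R} p_r \langle \nabla f, \boldsymbol{n}\rangle \,\mathrm{d}S$ vanishes as $R \to \infty$. Two facts from Assumption~\ref{ass:slc} handle this.
\begin{itemize}
  \item Strong concavity of $f$ gives Gaussian-type decay, $p_r(\boldsymbol{x}) \le C e^{-\alpha \nbr{\boldsymbol{x}-\boldsymbol{x}^\star}^2/2}$, where $\boldsymbol{x}^\star$ is the mode.
  \item The $L$-Lipschitz property of $\nabla f$ gives linear growth, $\nbr{\nabla f(\boldsymbol{x})} \le L\nbr{\boldsymbol{x}-\boldsymbol{x}^\star}$, since $\nabla f(\boldsymbol{x}^\star) = 0$.
\end{itemize}
The boundary integrand is then bounded by $R^{d} e^{-\alpha R^2/2}$ up to constants, which tends to zero. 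The same two facts show that $\nbr{\nabla f}^2 p_r$ and $\Delta f\, p_r$ are integrable; the latter holds because $\nabla^2 f$ is bounded, so $|\Delta f| \le Ld$.

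I would do the integration by parts on balls $B_R$ and then pass $R\to\infty$. Alternatively, one can write $\mathbb{E}_{p_r}[\nbr{\nabla f}^2] = \int \nbr{\nabla p_r}^2/p_r$ and use the divergence theorem on $\nabla p_r$ directly. If $f$ is only $C^{1,1}$ rather than $C^2$, the Hessian exists almost everywhere by Rademacher's theorem and the argument goes through with weak derivatives, or by mollifying $p_r$.

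I expect this regularity and decay check to be the only, mild, obstacle. The rest is the one-line trace bound above.
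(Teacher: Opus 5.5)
Your proposal is correct and takes the same approach as the paper's proof. You integrate by parts using $\nabla p_r = p_r \nabla f$ to get $\mathbb{E}_{p_r}\|\nabla f\|^2 = -\mathbb{E}_{p_r}[\Delta f]$, then bound the trace by $Ld$ via $\nabla^2 f \succeq -L I_d$. Your boundary-term argument (Gaussian decay around the mode, linear growth of $\nabla f$ from $\nabla f(\boldsymbol{x}^\star)=0$, and the limit over balls $B_R$) just spells out what the paper states in one clause.
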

\begin{proof}
Since $\nabla p_r = p_r\,\nabla f$, integration by parts gives
\begin{equation*}
  \int \nbr{\nabla f}^2\, p_r
  = \int \langle \nabla f,\, \nabla p_r \rangle
  = -\int (\Delta f)\, p_r
  = \mathbb{E}_{p_r}\big[\operatorname{tr}(-\nabla^2 f)\big]
  \le L d ,
\end{equation*}
where the last inequality uses $-\nabla^2 f \preceq L I_d$ from Assumption~\ref{ass:slc}, and the boundary terms vanish because $p_r$ has sub-Gaussian tails under strong log-concavity while $\nabla f$ grows at most linearly.
\end{proof}

\begin{lemma}\label{lem:weak2W2}
Let $p,(\mu_h)_{h>0}\in\mathcal{P}_2(\mathbb{R}^d)$, and suppose $\mu_h\to p$ in
$\W$ as $h\to0$. Suppose $\boldsymbol{A}=\nabla\phi\in L^2(p)$ is a gradient field satisfying
\[
  \int f\,d\mu_h-\int f\,dp
  = -\frac{h^2}{2}\int \langle \boldsymbol{A},\nabla f\rangle\,p\,dx + o(h^2),
\]
for all smooth test functions $f$. Then $\W(\mu_h,p)=\frac{h^2}{2}\,\|\boldsymbol{A}\|_{L^2(p)}+o(h^2)$.
\end{lemma}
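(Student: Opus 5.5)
We prove Lemma~\ref{lem:weak2W2} by linearizing optimal transport around $p$. The guiding picture is Otto calculus: the weak expansion says that $\mu_h$ is, to leading order, the pushforward of $p$ under the map $T_h(\boldsymbol{x}) := \boldsymbol{x} - \tfrac{h^2}{2}\boldsymbol{A}(\boldsymbol{x})$. Since $\boldsymbol{A}=\nabla\phi$ is a gradient, this map is itself (up to $o(h^2)$) an optimal transport map, and its $L^2(p)$ displacement is exactly $\tfrac{h^2}{2}\|\boldsymbol{A}\|_{L^2(p)}$. We establish the upper and lower bounds separately.

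\textbf{Lower bound (duality).} We test with $f=\phi$, or with a smooth truncation $\phi_R$ of it, using the Kantorovich-type inequality
\[
\Big|\int f\,d\mu_h-\int f\,dp\Big| \le \|\nabla f\|_{L^2(p)}\,\W(\mu_h,p) + o(\W(\mu_h,p)).
\]
To obtain this, take an optimal coupling $(X,Y)$ with $Y\sim p$ and write $f(X)-f(Y)=\langle\nabla f(Y),X-Y\rangle + O(\|\nabla^2 f\|_\infty|X-Y|^2)$. Cauchy--Schwarz then gives the displayed bound, with remainder $O(\W^2)$. Combining with the hypothesis,
\[
\tfrac{h^2}{2}\,\langle \boldsymbol{A},\nabla\phi_R\rangle_{L^2(p)} \le \|\nabla\phi_R\|_{L^2(p)}\,\W + O(\W^2)+o(h^2).
\]
We first need the a priori bound $\W=O(h^2)$, which follows from the upper bound below. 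Dividing by $h^2$ and sending $h\to0$ and then $R\to\infty$ yields $\liminf_{h\to0} \W/h^2 \ge \tfrac12\|\boldsymbol{A}\|_{L^2(p)}$.

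\textbf{Upper bound (construction).} Set $\nu_h := (T_h)_\#p$. Then $\W(\nu_h,p)\le \tfrac{h^2}{2}\|\boldsymbol{A}\|_{L^2(p)}$ trivially, and a Taylor expansion gives
\[
\int f\,d\nu_h-\int f\,dp = -\tfrac{h^2}{2}\int\langle\boldsymbol{A},\nabla f\rangle\,dp + o(h^2).
\]
So $\mu_h$ and $\nu_h$ agree weakly to order $o(h^2)$ on smooth test functions. It then suffices to show $\W(\mu_h,\nu_h)=o(h^2)$, after which the triangle inequality closes the upper bound.

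\textbf{Main obstacle.} Converting the weak agreement $o(h^2)$ into a $\W$ agreement $o(h^2)$ is the hard step. Weak convergence against smooth test functions controls only negative Sobolev norms. The natural tool is the comparison $\W(\mu,\nu)\lesssim\|\mu-\nu\|_{\dot H^{-1}(p)}$, which is valid when the densities are bounded below relative to $p$ (Peyre's inequality). This requires $o(h^2)$ to hold uniformly over test functions with $\|\nabla f\|_{L^2(p)}\le1$, not merely pointwise in $f$. I would therefore read the hypothesis as holding uniformly in that class, which is how it is produced in the proof of Theorem~\ref{thm:main}, where the remainders are polynomial-growth Taylor terms controlled by the moments in Assumption~\ref{ass:reg}. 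Concretely, I would carry out the duality step with an $H^{-1}$ norm on the difference $\mu_h-\nu_h$, using $\mu_h\to p$ in $\W$ together with regularity of the densities to justify the comparison. If this fails in full generality, the fallback is to add the regularity (smooth positive densities, uniform remainder) to the lemma's hypotheses, since these are available wherever the lemma is applied.
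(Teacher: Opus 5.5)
Your two-sided strategy is the paper's. For the upper bound, you push $p$ forward by $T=\mathrm{Id}-\tfrac{h^2}{2}\boldsymbol{A}$ and use the triangle inequality through $T_\#p$. For the lower bound, you test with $\phi$ and bound $\bigl|\int\phi\,d\mu_h-\int\phi\,dp\bigr|$ by $\W(\mu_h,p)\bigl(\|\boldsymbol{A}\|_{L^2(p)}+o(1)\bigr)$; your Taylor expansion along an optimal coupling, with truncations $\phi_R$ of bounded Hessian, is a concrete and rigorous version of the paper's appeal to Otto calculus along the geodesic.

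The step you single out as the main obstacle is exactly the one the paper does not prove. It only asserts $\W(\mu_h,T_\#p)=o(h^2)$ ``because $\mu_h$ and $T_\#p$ share the same weak expansion''. You are right that this does not follow, and in fact the lemma as stated is false. Take $d=1$, $p=\mathcal{N}(0,1)$ with density $\rho$, $\boldsymbol{A}=0$, and let $\mu_h$ have density $\rho(x)\bigl(1+\tfrac12\sin(x/h^2)\bigr)$. Then:
\begin{itemize}
\item repeated integration by parts gives $\int f\,d\mu_h-\int f\,dp=O(h^{2k})$ uniformly over $\|f\|_{C^k}\le1$, for every $k\ge2$;
\item $\mu_h\to p$ in $\W$, by weak convergence and equal second moments;
\item yet $\W(\mu_h,p)\ge W_1(\mu_h,p)=\int|F_{\mu_h}-F_p|\,dx=(1/\pi+o(1))\,h^2$, whereas the lemma predicts $o(h^2)$.
\end{itemize}
So requiring the remainder to be $o(h^2)$ uniformly over $\|\nabla f\|_{L^2(p)}\le1$ (an $\dot{H}^{-1}(p)$ bound) is necessary, not merely convenient. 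With that uniformity on both the $\mu_h$ and the $\nu_h$ side, and with the density regularity you list, your Peyre-type comparison does close the upper bound.

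Three refinements.

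\emph{The lower bound is self-contained.} It does not need the a priori estimate $\W=O(h^2)$, so it need not lean on the problematic upper bound. Since $\W(\mu_h,p)\to0$ is assumed, the remainder $O(\|\nabla^2\phi_R\|_\infty\W^2)$ is $o(\W)$. Dividing by $h^2$ then directly yields $\tfrac12\langle\boldsymbol{A},\nabla\phi_R\rangle_{L^2(p)}\le\|\nabla\phi_R\|_{L^2(p)}\liminf_{h\to0}\W/h^2$.

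\emph{Uniformity is also needed for $\nu_h=T_\#p$.} The naive Taylor remainder $O(h^4\|\nabla^2 f\|_\infty)$ is not of the required form $o(h^2)\|\nabla f\|_{L^2(p)}$. Instead write $\int f\,d\nu_h-\int f\,dp+\tfrac{h^2}{2}\langle\boldsymbol{A},\nabla f\rangle_{L^2(p)}=-\int_0^{h^2/2}\!\int\langle\nabla f,\,m_\tau-\boldsymbol{A}p\rangle\,dx\,d\tau$, where $m_\tau$ is the push-forward of the vector measure $\boldsymbol{A}p\,dx$ under $\mathrm{Id}-\tau\boldsymbol{A}$. This is $o(h^2)\|\nabla f\|_{L^2(p)}$ as soon as $\sup_{\tau\le h^2/2}\|m_\tau/p-\boldsymbol{A}\|_{L^2(p)}\to0$.

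\emph{Theorem~\ref{thm:main} does not supply the uniform hypothesis.} Its Taylor remainders are controlled by higher derivatives of $f$; fourth-order ones already appear in the displayed Langevin terms. That gives uniformity only over weighted $C^k$-balls, which the example above shows is insufficient. To feed your strengthened lemma there, you would also have to move those derivatives onto the smooth transition densities and establish the expansions in $\dot{H}^{-1}(p)$.
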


\begin{proof}
Write $\|\cdot\|:=\|\cdot\|_{L^2(p)}$ and $T:=\mathrm{Id}-\tfrac{h^2}{2}\boldsymbol{A}$.

\emph{Upper bound.} Coupling $p$ with $T_\#p$ through $x\mapsto T(x)$,
$W_2(\mu_h,p)\le W_2(\mu_h,T_\#p)+W_2(T_\#p,p)\le o(h^2)+\tfrac{h^2}{2}\|\boldsymbol{A}\|$,
where the first term is $o(h^2)$ because $\mu_h$ and $T_\#p$ share the same weak
expansion, and the second uses $\int\|x-T(x)\|^2 p=\tfrac{h^4}{4}\|\boldsymbol{A}\|^2$.

\emph{Lower bound.} Testing the hypothesis with $f=-\phi$ gives
\[
  \int(-\phi)\,d\mu_h-\int(-\phi)\,dp
  =\frac{h^2}{2}\int\|\boldsymbol{A}\|^2\,p\,dx+o(h^2)
  =\frac{h^2}{2}\|\boldsymbol{A}\|^2+o(h^2).
\]
On the other hand, integrating $\nabla\phi$ along the 
$W_2$-geodesic from $p$ to $\mu_h$ and applying the first-order (Otto) calculus
on $(\mathcal{P}_2,W_2)$ \citep{otto2001geometry,ambrosio2005gradient},
\[
  \int(-\phi)\,d\mu_h-\int(-\phi)\,dp\le W_2(\mu_h,p)\,\big(\|\boldsymbol{A}\|+o(1)\big),
\]
where the $o(1)$ holds because $\mu_h\to p$ in $W_2$, so the geodesic contracts
to $p$. Combining the two displays,
\[
  \frac{h^2}{2}\|\boldsymbol{A}\|^2+o(h^2)\le W_2(\mu_h,p)\,\big(\|\boldsymbol{A}\|+o(1)\big),
\]
whence $W_2(\mu_h,p)\ge\tfrac{h^2}{2}\|\boldsymbol{A}\|+o(h^2)$. Together with the
upper bound, this proves the claim.

\end{proof}

\section{Experimental Setup Details}\label{app:exp_setup}

Except for the noise level $\eta$ and KL coefficient $\beta$, all other hyperparameters are kept the same across methods. The image-generation configuration largely follows Flow-GRPO.

For SD3.5-M-based image generation, we use 48 prompts per epoch, a group size of 24 generations per prompt, LoRA with rank $r=32$ and $\alpha=64$, a learning rate of $3\times10^{-4}$, and a CFG scale of 4.5. For multi-reward optimization, each reward is normalized separately before being combined. We use weights of 0.4 for HPS-v2.1 and 0.6 for CLIP score. The noise level is set to $\eta=0.7$ for Flow-GRPO, $\eta=0.9$ for CPS, and $\eta=0.8$ for \ours in the OCR setting and $\eta=0.7$ in all other human-preference settings.  The KL coefficient $\beta$ is set to 0 for all human-preference reward experiments. For verifiable rewards such as OCR, we use $\beta>0$ to preserve image quality: $\beta=0.04$ for Flow-GRPO, $\beta=10^{-4}$ for CPS, and $\beta=4\times10^{-3}$ for \ours.

For FLUX.1-Dev-based image generation, we use the same prompt and group sizes as above, set $\beta=0$, and use LoRA with rank $r=64$ and $\alpha=128$ and a learning rate of $3\times10^{-4}$. For multi-reward optimization, rewards are normalized separately and combined with weights 0.7 for HPS-v2.1 and 0.3 for CLIP score. We set noise level $\eta=0.9$ for Flow-GRPO and CPS; for \ours, $\eta=0.8$ for HPS and $\eta=0.9$ for the combined HPS–CLIP setting.

For video generation, we use 16 prompts per epoch, a group size of 8 generations per prompt, a resolution of $480\times480$, 53 frames at 8 fps, and a learning rate of $10^{-5}$. The noise level is set to $\eta=0.25$ for DanceGRPO, $\eta=0.8$ for CPS, and $\eta=0.6$ for \ours.

\section{Additional Results}\label{app:additional_res}

 \vspace{20pt}
\begin{figure}[htbp]
\centering	\includegraphics[width=1.0\linewidth]{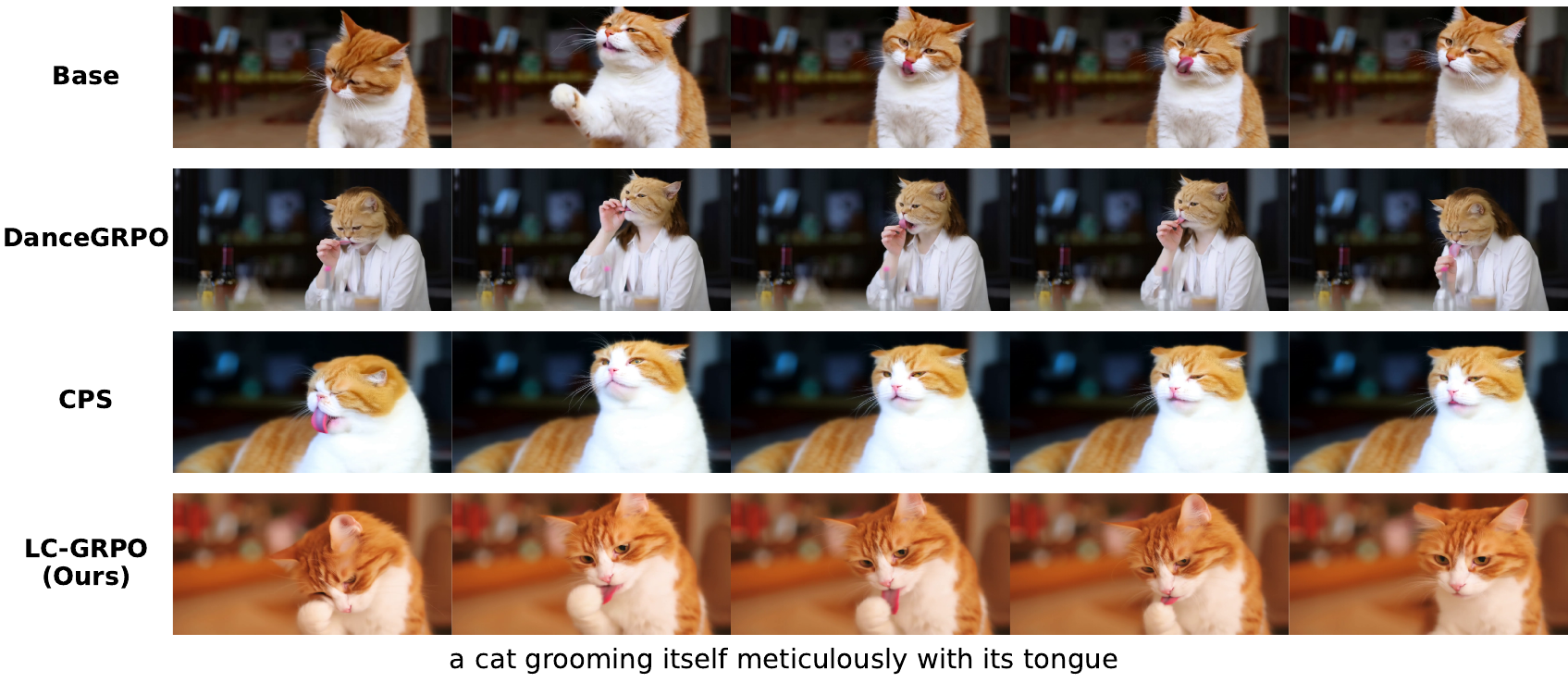}
  
\includegraphics[width=1.0\linewidth]{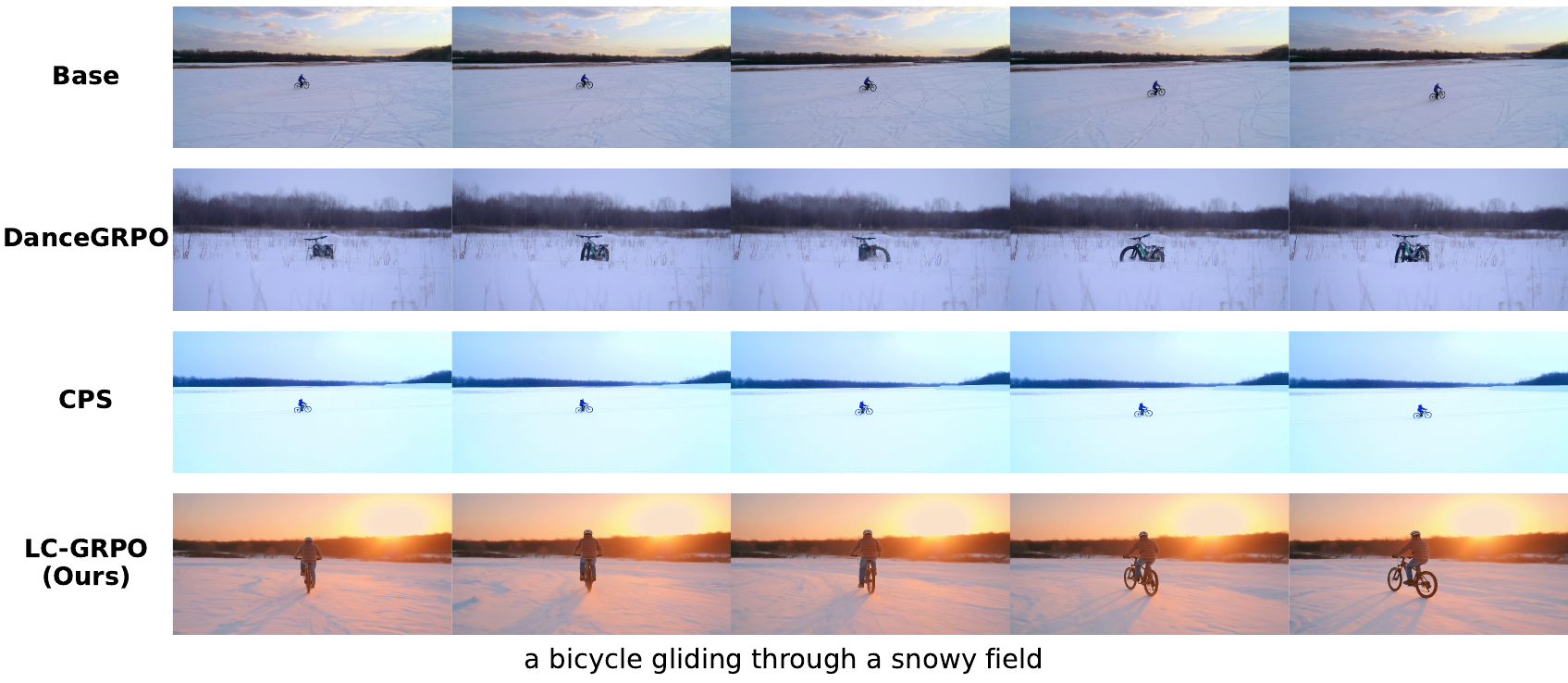} 
   \vspace{-.05in}
	\caption{ Qualitative comparison based on HunyuanVideo trained with the visual quality metric in VideoAlign. }
	\vspace{-.10in}
\end{figure}

\begin{figure}[ht]
\vspace{-11.2pt}
\centering	\includegraphics[width=1.0\linewidth]{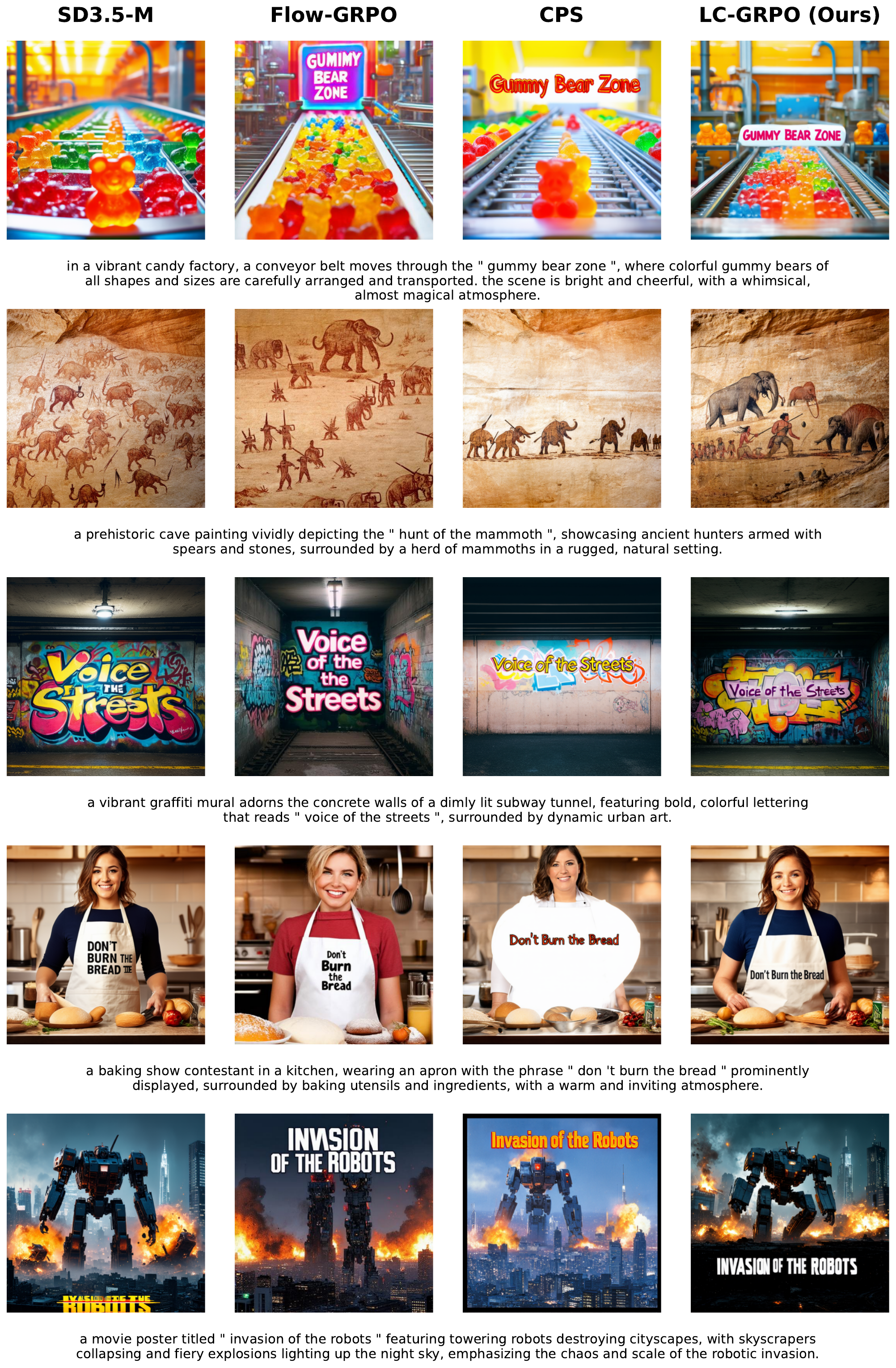}\\
   \vspace{-.05in}
	\caption{ Additional qualitative comparison based on SD3.5-M trained with OCR reward. }
	\vspace{-.10in}
\end{figure}

\begin{figure}[ht]
\centering	\includegraphics[width=1.0\linewidth]{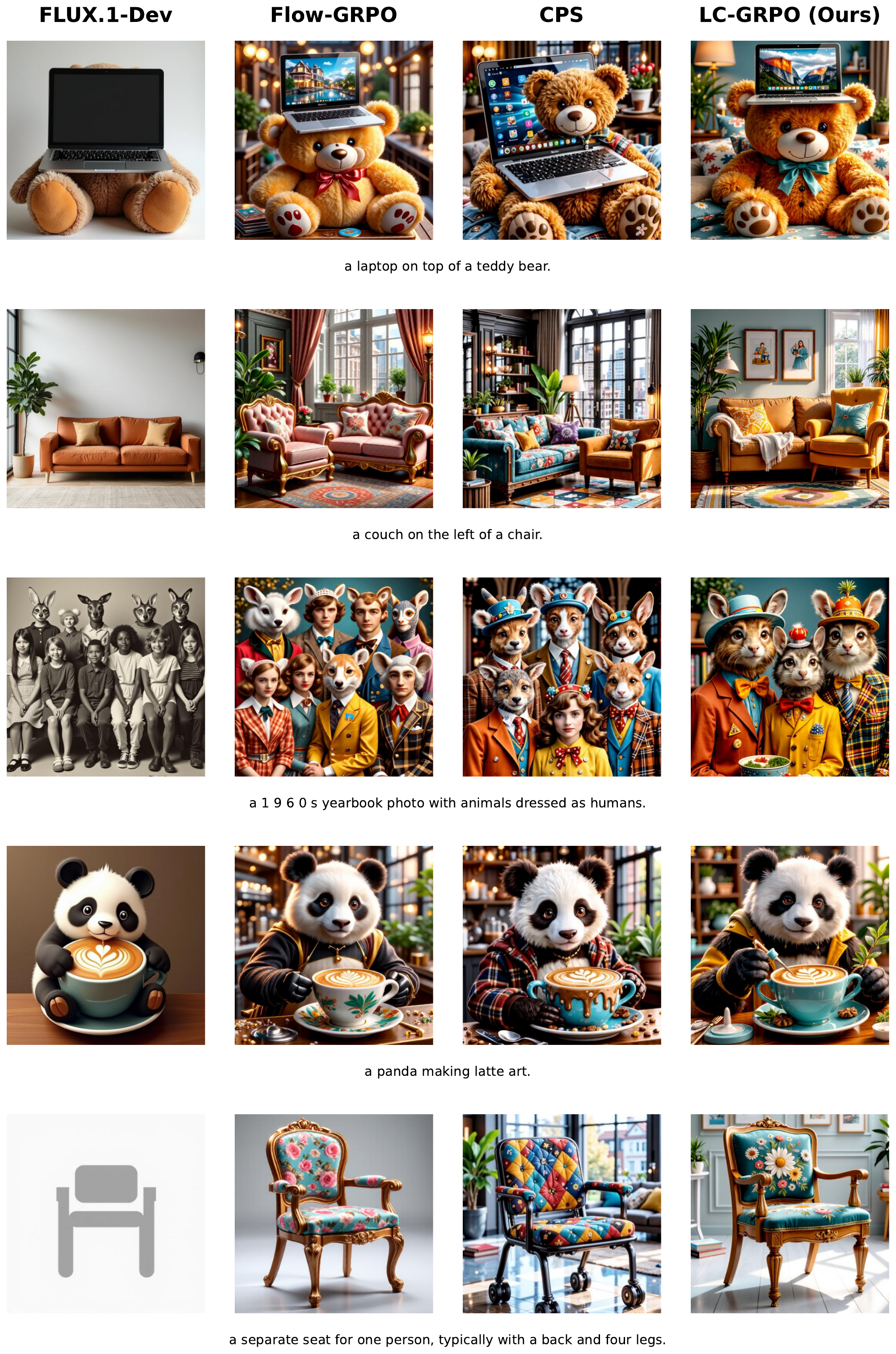}\\
   \vspace{-.05in}
	\caption{ Additional qualitative comparison based on FLUX.1-Dev trained with HPS-v2.1 reward, testing on DrawBench. }
	\vspace{-.10in}
\end{figure}

\end{document}